\crefname{section}{\S}{\S\S}
\Crefname{section}{\S}{\S\S}
\crefname{figure}{Fig.}{Fig.}
\crefname{alg}{Alg.}{Alg.}
\crefname{line}{line}{lines}
\crefname{appendix}{App.}{}
\crefname{equation}{Eq.}{Eqs.}
\crefname{table}{Table}{Tables}
\crefname{proposition}{Proposition}{Propositions}
\newtheorem{definition}{Definition}
\newtheorem{lemma}{Lemma}
\newtheorem{takeaway}{Takeaway}
\newcommand{\colourword}{purple}
\newcommand{\colourdubword}{MidnightBlue}
\newcommand{\T}{\mathrm{T}}
\newcommand{\mywordfunc}[2]{\newcommand{#1}{{\color{\colourword}#2}}}
\newcommand{\mydubwordfunc}[2]{\newcommand{#1}{{\color{\colourdubword}#2}}}
\mywordfunc{\alphabet}{\Sigma}
\mywordfunc{\word}{w}
\mywordfunc{\words}{\boldsymbol{w}}
\mywordfunc{\Word}{\mathrm{W}}
\mywordfunc{\Words}{\boldsymbol{\Word}}
\mywordfunc{\wordsspace}{\mathcal{W}}
\mydubwordfunc{\Sigmaduplicated}{\overline{\Sigma}}
\mydubwordfunc{\dubword}{c}
\mydubwordfunc{\dubwords}{\boldsymbol{\dubword}}
\mydubwordfunc{\Dubword}{\mathrm{C}}
\mydubwordfunc{\Dubwords}{\boldsymbol{\Dubword}}
\mydubwordfunc{\dubwordsspace}{\mathcal{C}}
\mydubwordfunc{\alphabetdupsmap}{\mathbb{S}}
\newcommand{\alphabetdup}{\mathcal{S}}
\newcommand{\alphabetdupd}{\overline{\mathcal{S}}}
\newcommand{\pdub}{p}
\newcommand{\phybrid}{p_{\alphabetdupsmap}}
\newcommand{\model}{\widehat{p}}
\newcommand{\modeldub}{\widehat{p}}
\newcommand{\modelproj}{\model_\alphabetdupsmap}
\newcommand{\whitespacedupsmap}{\alphabetdupsmap_\text{space}}
\newcommand{\pluraldupsmap}{\alphabetdupsmap_\text{plural}}
\newcommand{\lowerdupsmap}{\alphabetdupsmap_\text{lower}}
\newcommand{\alldupsmap}{\alphabetdupsmap_\text{all}}
\newcommand*{\circled}[1]{\tikz[baseline=(char.base)]{
            \node[shape=circle,draw,inner sep=1pt] (char) {\normalfont{\small #1}};}}
\newcommand{\subword}[1]{\textit{#1}}
\newcommand{\loss}{\mathcal{L}}
\newcommand{\dataset}{\mathcal{D}}
\newcommand{\trainset}{\dataset_{\mathrm{trn}}}
\newcommand{\testset}{\dataset_{\mathrm{eval}}}
\newcommand{\defeq}{\mathrel{\stackrel{\textnormal{\tiny def}}{=}}}
\newcommand{\ent}{\mathrm{H}}
\newcommand{\xent}{\widehat{\ent}}
\newcommand{\mi}{\mathrm{MI}}
\newcommand{\alphabetdupnumber}[1]{\alphabetdup_{\scaleto{\circled{#1}}{7pt}}}
\newcommand{\ppl}{\mathrm{PPL}}
\newcommand{\pplbf}{\mathbf{PPL}}
\newcommand{\ppldedup}{\ppl_\alphabetdupsmap}
\newcommand{\ppldedupbf}{\pplbf_\alphabetdupsmap}
\newcommand{\lossdedup}{\loss_\alphabetdupsmap}
\DeclareRobustCommand{\dubwordnumber}[1]{\dubword_{\scaleto{\circled{#1}}{7pt}}}
\newcommand{\dubwordone}{\dubword_{\scaleto{\circled{1}}{7pt}}}
\newcommand{\dubwordtwo}{\dubword_{\scaleto{\circled{2}}{7pt}}}
\DeclareRobustCommand{\wordnumber}[1]{\word_{\scaleto{\circled{#1}}{7pt}}}
\DeclareRobustCommand{\wordnumberlarge}[1]{\word_{\scaleto{\circled{#1}}{10pt}}}
\newcommand{\wordone}{\word_{\scaleto{\circled{1}}{7pt}}}
\newcommand{\wordtwo}{\word_{\scaleto{\circled{2}}{7pt}}}
\newcommand{\wordthree}{\word_{\scaleto{\circled{3}}{7pt}}}
\newcommand{\wordfour}{\word_{\scaleto{\circled{4}}{7pt}}}
\newcommand{\prduplicate}{p(\wordnumber{i}' \mid \dubwordnumber{i})}
\newcommand{\prdontduplicate}{p(\wordnumber{i} \mid \dubwordnumber{i})}
\newcommand{\flatten}{\mathrm{flatten}}
\newcommand{\one}{\mathbbm{1}}
\newcommand{\defn}[1]{\textbf{#1}}
\newcommand{\saveForCr}[1]{}
\title{On the Effect of (Near) Duplicate Subwords in Language Modelling}
\newcommand*\samethanks[1][\value{footnote}]{\color{darkblue} \footnotemark[#1]}
\newcommand{\emailadress}[1]{\texttt{#1}}
\newcommand{\ethz}{1}
\newcommand{\ethai}{2}
\author{Anton Sch\"afer$^\ethz$, Thomas Hofmann$^\ethz$, Imanol Schlag$^{\ethai,}$\thanks{Shared supervision.}, Tiago Pimentel$^{\ethz,}$\samethanks\\
  $^\ethz$ETH Z\"urich, $^\ethai$ETH AI Center \\
   \emailadress{scanton@ethz.ch}, \{\emailadress{thomas.hofmann}, \emailadress{imanol.schlag}, \emailadress{tiago.pimentel}\}\texttt{@inf.ethz.ch}
}
\begin{document}
\maketitle
\begin{abstract}
Tokenisation is a core part of language models (LMs).
It involves splitting a character sequence into subwords which are assigned arbitrary indices before being served to the LM.
While typically lossless, however,
this process may lead to less sample efficient LM training: as it removes character-level information, it could make it harder for LMs to generalise across similar subwords, such as \subword{now} and \subword{Now}. 
We refer to such subwords as \defn{near duplicates}.
In this paper, we study the impact of near duplicate subwords on LM training efficiency.
First, we design an experiment that gives us an upper bound to how much we should expect a model to improve if we could perfectly generalise across near duplicates.
We do this by duplicating each subword in our LM's vocabulary, creating perfectly equivalent classes of subwords.
Experimentally, we find that LMs need roughly 17\% more data when trained in a fully duplicated setting.
Second, we investigate the impact of naturally occurring near duplicates on LMs.
Here, we see that merging them considerably hurts LM performance. 
Therefore, although subword duplication negatively impacts LM training efficiency, naturally occurring near duplicates may not be as similar as anticipated, limiting the potential for performance improvements.

\vspace{.3em}
\hspace{.5em}\includegraphics[width=1.25em,height=1.25em]{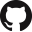}{\hspace{.75em}\parbox{\dimexpr\linewidth-2\fboxsep-2\fboxrule}{\vspace{-5pt} \href{https://www.github.com/antonschafer/duplicate-subwords}{antonschafer/duplicate-subwords}}}
\end{abstract}

\section{Introduction}

Most modern language models (LMs) do not have direct access to the bits or characters which make up the text that they must model.
Rather, they operate on higher-level units, so-called tokens, which are elements of a finite set of previously defined \defn{subwords}.
This set of subwords is typically obtained as the output of a tokenisation method and forms an LM's \defn{vocabulary} \cite{gage1994new,sennrich-etal-2016-neural,kudo-2018-subword,wu2016google}.
Importantly, most tokenisation algorithms are lossless: the original character sequence is perfectly recoverable from its tokenised version.\looseness=-1

\begin{table}
\centering
\begin{tabular}{lcc}
\toprule
\textbf{Model} & \textbf{Near Duplicate Rate} \\
\midrule
GPT-\{3.5, 4, 4-turbo\} & 43\%\\
Claude 2.1 & 46\% \\
Llama 1 \& 2 & 35\% \\
Mistral 7B \& 8x7B &  37\%  \\
Gemma 7B & 39\% \\
\bottomrule
\end{tabular}
\caption{Near duplicate rates of modern LLM vocabularies. We consider subwords that only differ in whitespace, capitalization, or plural suffix as equivalent ($\alldupsmap$ deduplication mapping, see \cref{sec:deduplication_implementation}). For details, see \cref{sec:llm_dup_stats}.}
\label{tab:llm_dup_stats_short}
\vspace{-7pt}
\end{table}

A language model's vocabulary, however, may contain several \defn{near duplicate subwords}:
minimal pairs like \subword{now} and \subword{Now}, with roughly the same semantic meaning, but which differ due to typos, whitespace marking, or capitalisation \cite{stanic2023languini}.
Such near duplicates can make up over 40\% of the vocabulary of modern LMs (see \cref{tab:llm_dup_stats_short}).
Intuitively, if the model had access to character-level information, it should trivially generalise what it learns from one of these forms to the other.
Given only access to subword-level inputs, however, the model may not be able to do the same, or may require more data to do so.\looseness=-1

Previous work tried to address this issue by 
modelling language directly at the character, byte, or even pixel level \cite[\textit{inter alia}]{kim2016character,
clark-etal-2022-canine, xue-etal-2022-byt5,yu2023megabyte,rust2023language}.\footnote{These works are not solely motivated by near duplicates. 
Other commonly named advantages of character/byte-level models include: the possibility of optimising them end-to-end without relying on a two-stage approach, greater flexibility by not committing to a (potentially suboptimal) tokeniser, 
and more direct access to word forms which might be relevant in, e.g.,  word-play related tasks \citep{rozner2021decrypting}.}
However, while the existing literature has proposed several solutions to improve LMs generalisation across near duplicates, a proper quantification of the issue is still lacking.

In this work, we thus take a step back and assess the actual impact of near duplicate subwords on LMs' performance. 
To this end, we first propose a controlled synthetic setting where we duplicate every subword in our LM's vocabulary, allowing us to  \textbf{(1.)} quantify an LM's ability to generalise across perfectly equivalent duplicates and to \textbf{(2.)} carefully investigate how the generalisation happens.
This yields an upper bound on the cost incurred due to limited generalisation across real near duplicates: a vocabulary with 40\% duplicates (common for LLMs, see \cref{tab:llm_dup_stats_short}) may reduce data efficiency by up to 10\%.
We then \textbf{(3.)} investigate the tightness of this upper bound by merging naturally occurring near duplicates in an LM's vocabulary.
We find that deduplicating the vocabulary in this way, in general, hurts performance instead of improving it.
This suggests that real near duplicates might be less similar than anticipated, which might impose challenges when trying to leverage their similarity to improve LMs' performance.

\section{Language Modelling}
\label{sec:lang_modelling}

\newcommand{\eos}{\texttt{eos}\xspace}

Let $\alphabet$ be a vocabulary of subwords.
A language model $\model$ is formally defined as a probability distribution over the set of all finite sequences of subwords $\words = (\word_1, \word_2, ...) \in \wordsspace \defeq \alphabet^{*}$:\footnote{We define $\alphabet$ with a special end-of-sequence symbol (\eos). Any string with mid-sequence \eos is assigned probability zero.\looseness=-1}
\begin{align}
    \model(\words) = \prod_{t=1}^{|\words|} \model(\word_t \mid \words_{<t})
\end{align}
where $ \model(\word_t \mid \words_{<t})$ is the probability of token $\word_t$ given the sequence of previous tokens $\word_0, ... \word_{t-1}$.

In order for $\model$ to approximate the true distributions over natural strings $p(\words)$, we train this model to minimise its cross-entropy with $p$:
\begin{align}
    \xent(\Words) = \sum_{\words \in \wordsspace} p(\words) \sum_{t=1}^{|\words|} \log \frac{1}{\model(\word_t \mid \words_{<t})}
\end{align}
where $\Words$ represents a $\wordsspace$-valued random variable.
Since we do not know $p$, we approximate this objective using a finite training set $\trainset = \{\words_n\}_{n=1}^{N} \sim p(\words)$. 
This leads to the loss function:
\begin{align}
    \loss(\model(\words), \trainset) = \frac{1}{N} \!\sum_{\words \in \trainset}\! \sum_{t=1}^{|\words|} \log \frac{1}{\model(\word_t \mid \words_{<t})}
\end{align}

\section{Subword Duplication}
\label{sec:subword_duplication}

Now, let's assume there exist in our alphabet pairs or groups of subwords which are nearly identical---in both their orthography and semantics.
Such near duplicates can arise from various sources, including but not limited to 
capitalization differences (e.g., \subword{now} vs. \subword{Now}), 
typographical errors (\subword{language} vs. \subword{langauge}), 
the presence or absence of whitespace (e.g., \subword{the} vs. \subword{\_the}),\footnote{We denote spaces as \subword{`\_'} for readability.} 
and variations in spelling (e.g., \subword{modeling} vs. \subword{modelling}).
The question we are concerned with is: how might such subword duplication affect the performance of language models?\looseness=-1

To define this question formally, let $\alphabetdup$ be a set of disjoint sets of near duplicate subwords:
\begin{align}
    \alphabetdup = \left\{\begin{array}{c}
         \{\wordone, \wordtwo, \wordthree, \wordfour\}, \\
         \{\wordnumber{5}, \wordnumber{6}\}, \cdots, \{\wordnumber{i}, \wordnumberlarge{i+1}\}
    \end{array}\right\}
\end{align}
We index these as $\alphabetdupnumber{i}$ to represent the i'th set of near duplicates.
Further, let $\alphabetdupd = \{\dubwordnumber{i} \mid 0 < i \leq |\alphabetdup|\}$ be a set of \defn{canonical symbols} which we will use to represent each of these duplicate sets.
To find out how duplication affects LMs, we can create a map $\alphabetdupsmap: \alphabet \to \Sigmaduplicated$ which deduplicates subwords, mapping duplicates to the corresponding canonical symbols. 
This map is defined as:\looseness=-1
\begin{align}
    \alphabetdupsmap(\word) = \left\{\begin{array}{lr}
         \word & \mathrm{if }\,\, \word \notin \flatten(\alphabetdup) \\
         \dubwordnumber{i} & \mathrm{if }\,\, \word \in \alphabetdupnumber{i}
    \end{array}\right.
\end{align}
with $\Sigmaduplicated \defeq (\alphabet \setminus \flatten(\alphabetdup)) \cup \alphabetdupd $.
We are now in a position to define a distribution over deduplicated subword sequences:
\begin{align}
    \pdub(\dubwords) = 
    \sum_{\words \in \wordsspace} 
    p(\words)\,\one\{\dubwords = \alphabetdupsmap(\words)\}
    \label{eq:pdub_def}
\end{align}
where we overload $\alphabetdupsmap$ to operate on sequences $\words$ by applying it elementwise on each subword $\word_t \in \words$.
Note that $\pdub(\dubwords)$ is now a distribution over deduplicated subword sequences $\dubwords = (\dubword_1, \dubword_2, ...) \in \dubwordsspace \defeq \Sigmaduplicated^{*}$.
We can further define a hybrid conditional \defn{projected distribution} as:
\begin{align}
    &\phybrid(\dubword_t \mid \words_{<t}) = \label{eq:pproj_def} \\
    &\qquad\qquad \sum_{\word \in \alphabet} p(\word \mid \words_{<t})\,\one\{\dubword_t = \alphabetdupsmap(\word)\} \nonumber
\end{align}
In words, the conditional probability $\phybrid(\dubword_t \mid \words_{<t})$ of a deduplicated subword $\dubword_t$ is defined as the sum of the conditional probabilities of all subwords which map to it through $\alphabetdupsmap(\word)$.

\subsection{Comparing (De-)Duplicated LMs}
\label{sec:comparing_deduplicated_lms}

Typically, LMs are evaluated based on their cross-entropy (or perplexity) on a held-out test set.
It would, however, be unfair to simply compare the cross-entropies of LMs trained on $\pdub(\dubwords)$ and $p(\words)$.
The cross-entropy is lower bounded by the entropy---with a perfect model's cross-entropy equalling the entropy.
If $\pdub(\dubwords)$'s and $p(\words)$'s entropies are different, they would impose different optima achievable by LMs trained in each setting.
We write these distributions' entropies as:
\begin{align}
    \ent(\Words) 
    &= \sum_{\words} p(\words) \sum_{t=1}^{|\words|} \log \frac{1}{p(\word_t \mid \words_{<t})}
\end{align}
and $\ent(\Dubwords)$, analogously, 
where $\Dubwords$ denotes a $\dubwordsspace$-valued random variable.
Now note that, given their definition in Eq.~\ref{eq:pdub_def}, deduplicated sequences $\Dubwords$ are deterministic given the original subwords $\Words$.
The two entropies above are thus related via equation:\looseness=-1
\begin{align}
    \ent(\Words) = \ent(\Dubwords) + \ent(\Words \mid \Dubwords) \label{eq:factorize_ent}
\end{align}
Assuming $\Words$ cannot be deterministically predicted from $\Dubwords$, we have that $\ent(\Words \mid \Dubwords) > 0$;
this implies that predicting $\Words$ is strictly harder than $\Dubwords$.\looseness-1

To make these settings more easily comparable, we define $p(\words)$'s \defn{projected entropy} as:
\begin{align}
    \ent_{\alphabetdupsmap}(\Words) 
    &= \sum_{\words \in \wordsspace} p(\words) \sum_{t=1}^{|\words|} \log \frac{1}{\phybrid(\alphabetdupsmap(\word_t) \mid \words_{<t})} 
\end{align}
This entropy measures the uncertainty in predicting a deduplicated subword $\dubword_t$ given the duplicated context $\words_{<t}$.
Interestingly, we can show that:
\begin{align} \label{eq:MI_similarity}
    \ent_{\alphabetdupsmap}(\Words) &= \ent(\Dubwords) - \mi(\Words_{<\T}; \Dubwords_{\T} \mid \Dubwords_{< \T})
\end{align}
where $\mi(\Words_{<\T}; \Dubwords_{\T} \mid \Dubwords_{< \T})$ is the mutual information between a  subword context $\words_{<t}$ and the next deduplicated token $\dubword_t$ conditioned on the previous deduplicated tokens $\dubwords_{<t}$ (see \cref{app:MI_similarity} for a proof and the precise definition of this mutual information).
In both \cref{sec:results_perfect_duplication} and \cref{sec:results_natural_duplication}, we discuss how this value relates to our research question.

\section{Experimental Setup}

We implement all of our experiments in the codebase of the Languini Kitchen \citep{stanic2023languini}.
Below, we provide an overview of the models and datasets used here.
We refer the reader to \citeposs{stanic2023languini} work for more details regarding implementation choices, training setup, and dataset collection.

\paragraph{Model.}
We use the GPT model from Languini, which is a GPT-2 style transformer decoder \cite{radford2019language}. 
Unless otherwise noted, we use the ``small'' configuration with 12 layers, hidden size 768, and 85M non-embedding parameters.
We train models with sequence length 512, batch size 128, the Adam optimiser \cite{kingma2015adam}, and a cosine learning rate schedule from 6e-4 to 6e-6 with 500 warmup steps.

\paragraph{Data.}
We train on the Languini training data, a filtered version of the books3 subset from the Pile \cite{gao2020pile}.
For evaluation, we use the held-out Languini test set, which contains 11M tokens.
This data is pre-tokenised into a vocabulary of size 16k using a BPE tokeniser \cite{gage1994new,sennrich-etal-2016-neural} trained using SentencePiece \cite{kudo2018sentencepiece}.
Unless otherwise noted, we train our models for 18,265 steps---i.e., the first 1.2B tokens in our dataset---which corresponds to training the small GPT model for 6h on an RTX 3090 GPU; this is Languini's GPT small 6h setting.

\paragraph{Evaluation.}
We generally report our model's perplexity on the test set as our evaluation metric.
To ensure sufficient context for all predictions, we use a sliding window with steps of 128: we fill in a 512 tokens context, ignore the model's outputs on the initial 384, and evaluate it only using the last 128 tokens.
For models $\modeldub(\dubwords)$ trained on the deduplicated setting, we simply report their perplexities, defined as the exponentiated cross-entropy evaluated on a held-out test set $\testset$: 
$\ppl(\Dubwords) = \exp\left(\loss(\model(\dubwords); \testset)\right)$.
When evaluating models $\model(\words)$, trained in the duplicated setting,
we report their \defn{projected perplexity}, defined as:\looseness=-1
\begin{align}
    &\ppldedup(\Words) = \exp \left(\lossdedup(\model(\words); \testset)\right) \\
    &\quad = \exp\!\!\left(\!\frac{1}{N} \!\!\sum_{\words \in \testset}\! \sum_{t=1}^{|\words|} \log \frac{1}{\modelproj(\alphabetdupsmap(\word_t) \mid \words_{<t})} \right) \nonumber
\end{align}
where $\modelproj$ is defined analogously to $\phybrid$ (see Eq.~\ref{eq:pproj_def}).
Intuitively, we add up the probabilities of subwords $\word$ that are equivalent under $\alphabetdupsmap$ (i.e., which map to the same canonical symbol $\dubword$), to avoid giving $\modeldub(\dubwords)$ an unfair advantage over $\model(\words)$.

\section{Experiments and Results}

We evaluate our LM's ability to generalise over (near) duplicates in two settings: perfect and natural duplication.
In the perfect duplication setting, 
we compare LMs trained using either the default or a synthetically duplicated vocabulary; this gives us an upper bound for the impact of real near duplicates, as synthetically duplicated subwords are perfectly comparable in terms of their semantics.
In the natural duplication setting, we deduplicate the default vocabulary by merging real near duplicates. 
By comparing performance on the default and the deduplicated vocabulary, we verify whether the effect of natural near duplicates in LMs is comparable to the effect of perfect duplicates.

\begin{figure*}[t]
    \centering
    \includegraphics[width=.9\linewidth]{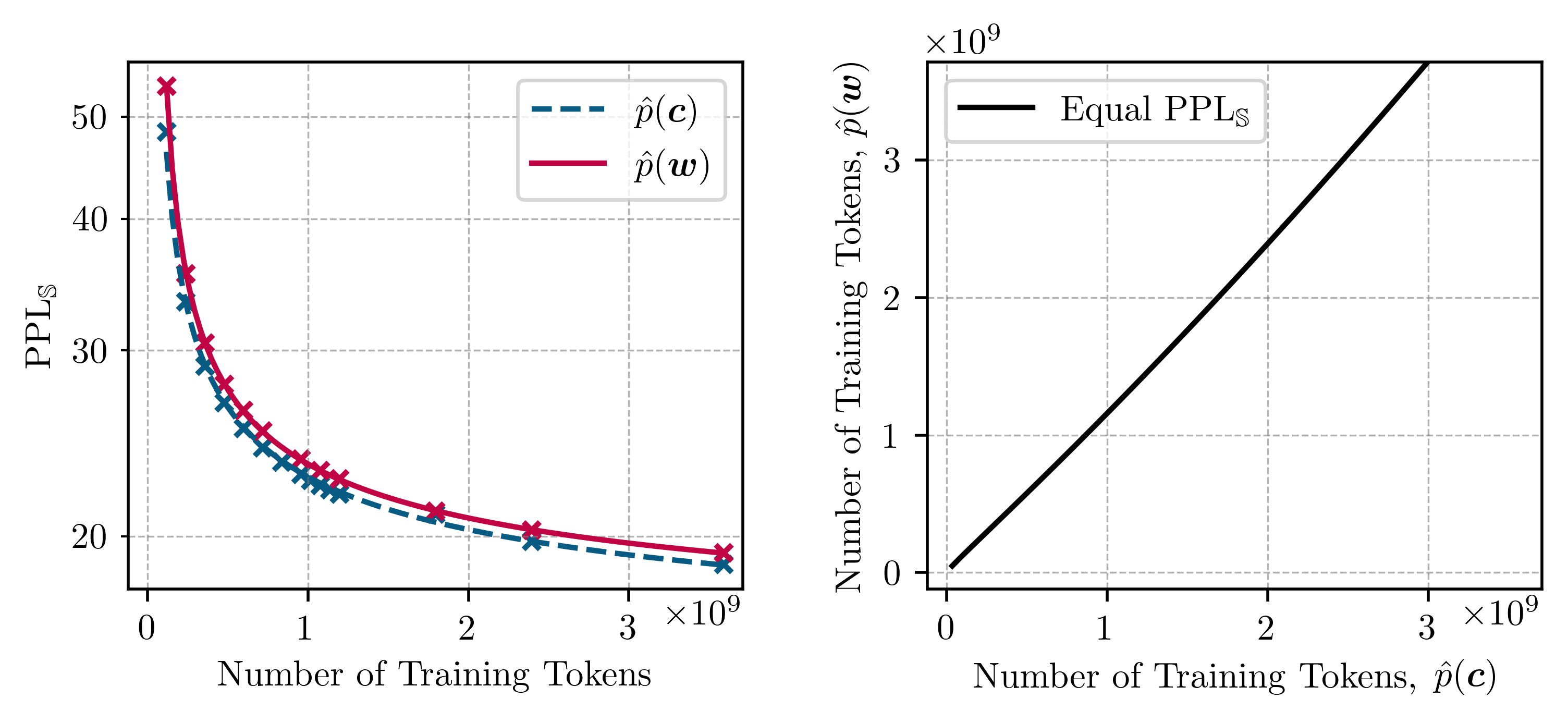}
    \vspace{-10pt}
    \caption{Left: Fitted power laws capturing the relationship between training data and $\ppldedup$. Our standard training set contains around $1.2$B tokens. Right: Data required to achieve the same performance with $p(\dubwords)$ and $p(\words)$, computed based on the fitted scaling law curves. In the considered interval, this curve's slope---which roughly corresponds to $\frac{\text{number of training tokens for $p(\words)$}}{\text{number of training tokens for $p(\dubwords)$}}$---is approximately equal to $\frac{1}{0.85}$.}
    \label{fig:scaling_dup}
\vspace{-5pt}
\end{figure*}

\subsection{Perfect Duplication}
\label{sec:results_perfect_duplication}

In this first set of experiments, we assume an idealised situation where all subwords in a near duplicate set $\word \in \alphabetdupnumber{i}$ are perfectly interchangeable with each other.
Choosing among these subwords, then, is neither impacted by prior subwords nor impacts future subword choices.
In this case, we can relate distributions $p(\words)$ and $\pdub(\dubwords)$ as:\looseness=-1
\begin{align}\label{eq:perfect_duplicate_choice}
    p(\words) 
    &= \prod_{t=1}^{|\words|} \pdub(\dubword_t \mid \dubwords_{<t})\!\!\underbrace{p(\word_t \mid \dubword_t)}_{\text{duplicate choice}}
\end{align}
Further, we can show that in this idealised setting $\mi(\Words_{<\T}; \Dubwords_{\T} \mid \Dubwords_{< \T}) = 0$; this is because the decomposition above implies conditional independence between $\word_t$ and any $\dubword_{t'}$ given $\dubword_t$. 
We thus have:
\begin{align}
    \ent_{\alphabetdupsmap}(\Words) &= \ent(\Dubwords)
\end{align}
which creates a perfectly controlled setting to evaluate language models.
If we could train a perfect language model on either distribution $p(\words)$ or $p(\dubwords)$, we would achieve the same performance in both settings.
Any difference in language modelling performance between these settings must thus derive from a language model's lack of ability to generalise from observing near duplicate subwords.

\begin{table}[t]
\centering
\begin{tabular}{lcc}
\toprule
& \multicolumn{2}{c}{$\ppldedupbf$} \\ \cmidrule(lr){2-3}
\textbf{Model}             & \textbf{GPT-S}   & \textbf{GPT-M}  \\
\midrule
$\model(\dubwords)$                   & 21.9      & 16.3         \\ 
$\model(\dubwords)$, 85\% of data    & 22.6       & 16.7       \\ 
$\model(\dubwords)$, 50\% of data    & 25.3       & -        \\ 
\midrule
$\model(\words)$               & 22.7  & 16.7          \\ 
\bottomrule
\end{tabular}
\caption{Impact of duplication on $\ppldedup$. Lower is better. The right column shows results for Languini's GPT-medium model with 370M parameters (vs 111M for small), trained on 2.8B tokens (vs 1.2B for small).}
\label{tab:dup_basic}
\vspace{-9pt}
\end{table}

\subsubsection{Empirical Implementation}

To achieve the perfect duplication described above, we simulate Eq.~\ref{eq:perfect_duplicate_choice} by duplicating every entry in our subword vocabulary.
First, we assume our BPE-generated vocabulary is composed of canonical symbols $\Sigmaduplicated = \{\dubwordone, \dubwordtwo, ...\}$.
This set is composed of 16k subwords.
We then duplicate each to get a vocabulary of size 32k:\footnote{Model $\model(\words)$ thus has more embedding parameters than $\model(\dubwords)$; these extra parameters, however, should not yield an unfair advantage (see \cref{app:embedding_params}).} $\alphabet=\{\wordone, \wordone', \wordtwo, \wordtwo',  ...\}$. 
This gives us the deduplication mapping $\alphabetdupsmap(\wordnumber{i}) = \alphabetdupsmap(\wordnumber{i}') = \dubwordnumber{i}$.
Given a sequence $\dubwords$ from our training set $\trainset$, we then create a duplicated sequence $\words$ by independently sampling the form of each token $\dubwordnumber{i} \in \dubwords$ to be either $\wordnumber{i}$ or $\wordnumber{i}'$.
Unless specified differently, we set all duplicate choice probabilities to be uniform (i.e., $\prdontduplicate=\prduplicate=0.5$ for all $i$).

\subsubsection{Raw Performance}

In this section, we describe our main results using the perfect duplication setting.
First, we note that, when training with $\prduplicate = 0.5$, 
each subword $\wordnumber{i}$ or $\wordnumber{i}'$ is only seen half as often by $\model(\words)$ as its original version $\dubwordnumber{i}$ is seen by $\model(\dubwords)$.
However, we achieve significantly better performance with $\model(\words)$ than with $\model(\dubwords)$ trained on only 50\% of the dataset (see \cref{tab:dup_basic}). 
The model thus seems to generalise across duplicates, with data containing $\word$ leading to improved performance on $\word'$ and vice versa.
Still, duplication significantly hurts performance.
Using the duplicated data, the model $\model(\words)$ is only about $85\%$ as data efficient as $\model(\dubwords)$, which is trained on the original data.
This suggests our LMs cannot generalise perfectly.
Interestingly, this trend stays relatively consistent across different amounts of training data (up to 3x, see \cref{fig:scaling_dup}) and also applies to a 3x larger GPT-medium model (see \cref{tab:dup_basic}). Further, if we vary the number of subwords we duplicate, interpolating from 0\% of the vocabulary ($\model(\dubwords)$ setting) to 100\% of the vocabulary ($\model(\words)$ setting), we obtain a roughly linear increase in $\ppldedup$ (see \cref{fig:frac_dup_ppl}).\looseness=-1

\begin{takeaway}
    The model is capable of generalising across duplicates, yet their presence negatively impacts performance.
\end{takeaway}

\begin{figure}[t]
    \centering
    \includegraphics[width=0.95\linewidth]{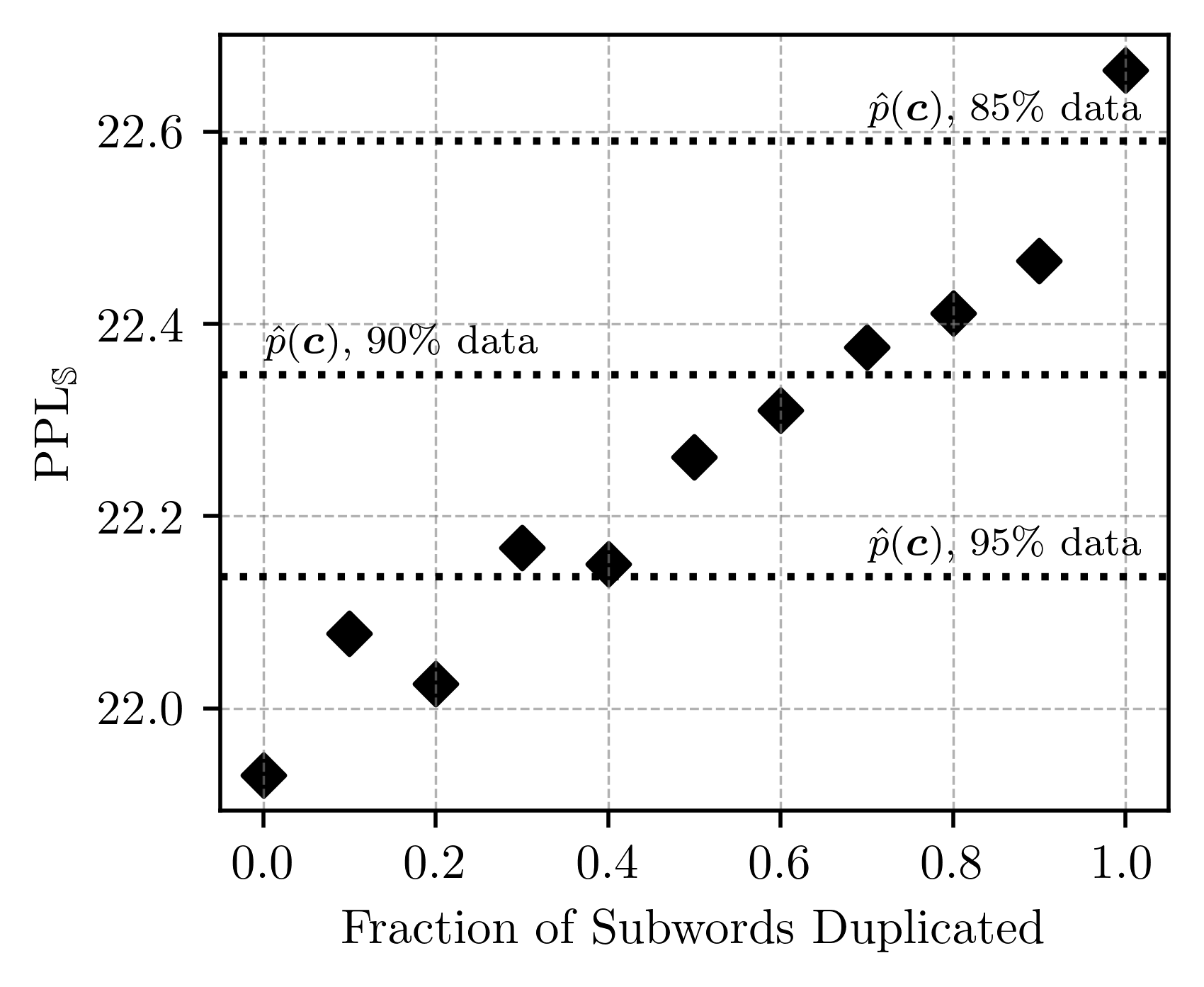}
    \vspace{-7pt}
    \caption{
    Impact of duplication on $\ppldedup$ while varying the fraction of subwords in the vocabulary that are duplicated ($1.0$ corresponds to $\model(\words)$, $0.0$ to $\model(\dubwords$)). Lower $\ppldedup$ is better. When duplicating 70\% of the vocabulary (which yields a 41\% duplication rate in the final vocabulary, roughly the rate of near duplicates in real vocabularies), we obtain $\ppldedup \approx 22.4$; this is equivalent to a $\approx 10\%$ decrease in data efficiency.
    }
    \label{fig:frac_dup_ppl}
    \vspace{-10pt}
\end{figure}

\subsubsection{Duplicates' Alignment}

One strategy the model could use to generalise across duplicated pairs $\wordnumber{i}$ and $\wordnumber{i}'$ is to fully align their representations.
If these word pairs have a cosine similarity of $1.0$, then any change to other model components would affect them similarly.
When we analyse our model $\model(\words)$'s  embeddings, we indeed observe a high average cosine similarity of around 0.8 among duplicate pairs. 
This number is even higher for frequent subwords (see \cref{fig:dup_cossim_freq});
it appears that a subword's frequency during training is an underlying driver for alignment.
This observation is intuitive: the representations of $\wordnumber{i}$ and $\wordnumber{i}'$ are both randomly initialised and converge to each other after a certain number of gradient updates.\looseness=-1

What causes this alignment of representations and what is it impacted by?
Loosely speaking, the contexts in which the duplicates appear follow the same distribution; this might lead to similar gradient signals throughout training.
If this is the case, then this high cosine similarity should not be an exclusive property of transformers, but apply to simpler architectures as well.
Interestingly, when training a word2vec model \cite{mikolov2013efficient} on the same data, we observe its embeddings exhibit even stronger alignment (details in \cref{app:emb_similarity}).

\begin{takeaway}
    \label{tak:representations_aligned}
    Representations of frequent duplicates have high cosine similarity.
\end{takeaway}

\begin{figure}[t]
    \centering
    \includegraphics[width=0.95\linewidth]{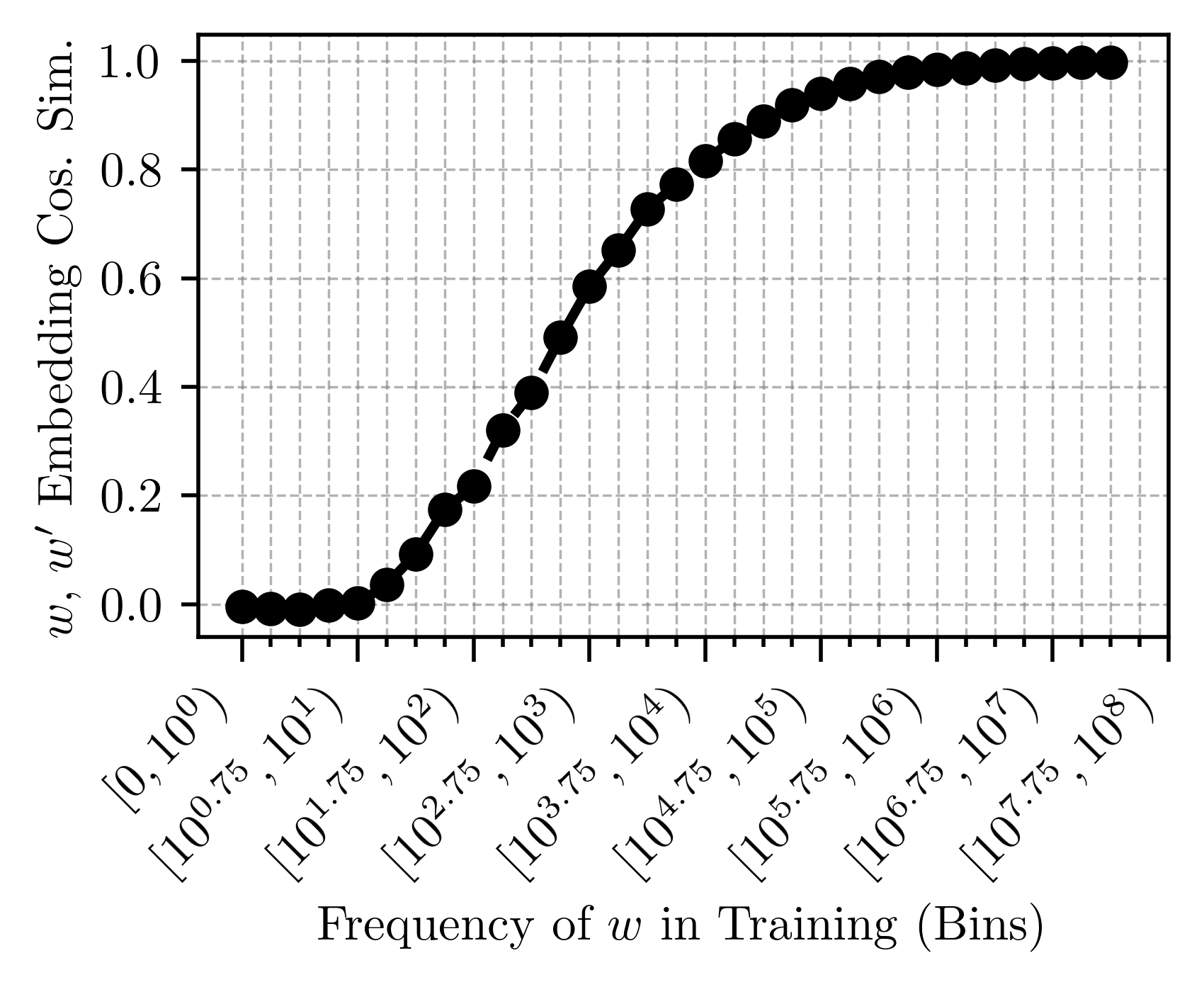}
    \vspace{-10pt}
    \caption{Input embedding cosine similarity of duplicates $\wordnumber{i}$, $\wordnumber{i}'$, by frequency. Frequencies binned and similarities averaged per bin.}
    \label{fig:dup_cossim_freq}
    \vspace{-9pt}
\end{figure}

\subsubsection{Finetuning Generalisation}

Presumably, the alignment of duplicated word pairs may cause the model to use the same ``circuits'' when processing those words, allowing it to generalise what it learns across them \cite{cammarata2020thread, elhage2021mathematical}.
To verify this hypothesis, we finetune our $\model(\words)$ model on GLUE \cite{wang2019glue}, employing only one subword from each duplicate pair as input (specifically, $\wordnumber{i}$ and never $\wordnumber{i}'$).
Intriguingly, when this finetuned model is subsequently evaluated on the unseen subwords (i.e., $\wordnumber{i}'$) it generalises perfectly, achieving the same accuracy (see \cref{tab:glue}).

\begin{table}[t]
\centering
\begin{tabular}{lcc}
\toprule
 & \multicolumn{2}{c}{\textbf{GLUE Accuracy}} \\ \cmidrule(lr){2-3}
\textbf{Model} & \textbf{on \(\wordnumber{i}\)} & \textbf{on \(\wordnumber{i}'\)} \\
\midrule
$\model(\dubwords)$ & 0.72 & - \\
$\model(\words)$ & 0.71 & 0.71 \\
\bottomrule
\end{tabular}
\caption{GLUE average validation accuracy achieved by fine-tuning solely on \(\wordnumber{i}\) inputs (and not on \(\wordnumber{i}'\)) while keeping the embedding layer frozen.}
\label{tab:glue}
\vspace{-5pt}
\end{table}

\begin{takeaway}
GLUE performance of $\model(\words)$ generalises across duplicate pairs despite being finetuned with $\wordnumber{i}$ and evaluated with $\wordnumber{i}'$.\looseness=-1
\end{takeaway}

\subsubsection{Comparing Duplicating or Not a Pair}

We observed that the alignment of duplicate subword representations seems to drive generalisation.
If infrequent subwords have less aligned representations, does this mean that they generalise less? 
To investigate the effects of duplication in a more controlled manner, we now duplicate only half of our vocabulary and train a model in this setting. 
This gives us a treatment group of duplicated subwords and a control group of non-duplicated subwords, allowing us to isolate the causal effect of duplication. 
We will now analyse this effect on the output side (predicting duplicated tokens compared to non-duplicated tokens) and on the input side (predictions based on a context with many duplicated tokens vs few duplicated tokens).

\begin{table}[t]
\centering
\begin{tabular}{lcc}
\toprule
\textbf{Subset} & \textbf{Mean $\mathbf{\Delta}$ Surprisal} \\
\midrule
All                             & 0.015 \\
Duplicated                      & 0.018  \\
Not Duplicated                  & 0.012  \\
\bottomrule
\end{tabular}
\vspace{-3pt}
\caption{Duplication of half the vocabulary: mean difference in surprisal to $\modeldub(\dubwords)$ for subwords within the treatment (duplicated) and control (non-duplicated) groups.}
\label{tab:dup_half_output_causal}
    \vspace{-9pt}
\end{table}

\begin{figure}[t]
    \centering
    \includegraphics[width=0.9\linewidth]{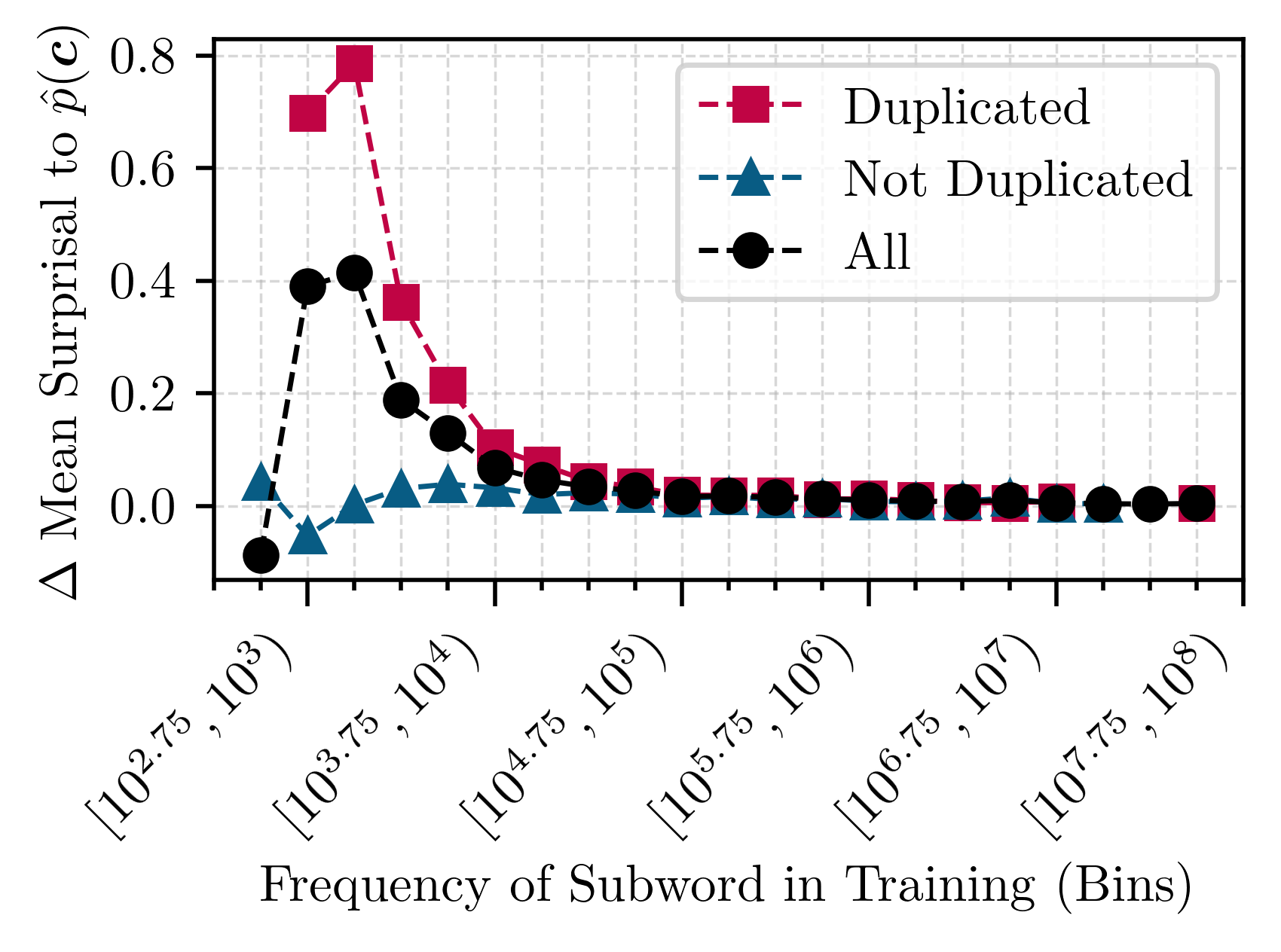}
    \vspace{-7pt}
    \caption{
         Duplication of half of the vocabulary: Analysing the mean surprisal difference per subword between $\model(\words)$ and $\model(\dubwords)$.
         Frequencies are categorised into bins, with averages computed for each bin.\protect\footnotemark
    }
    \label{fig:dup_output_surprisal_freq}
    \vspace{-9pt}
\end{figure}

\footnotetext{To reduce noise and ensure readability, we only consider subwords that occur at least 10 times in the test set and bins that contain at least 3 subwords.}

\newcommand{\countfunc}[1]{\mathrm{count}(#1)}
\newcommand{\concat}{\|}

\begin{figure*}[t]
    \centering
    \includegraphics[width=0.93\linewidth]{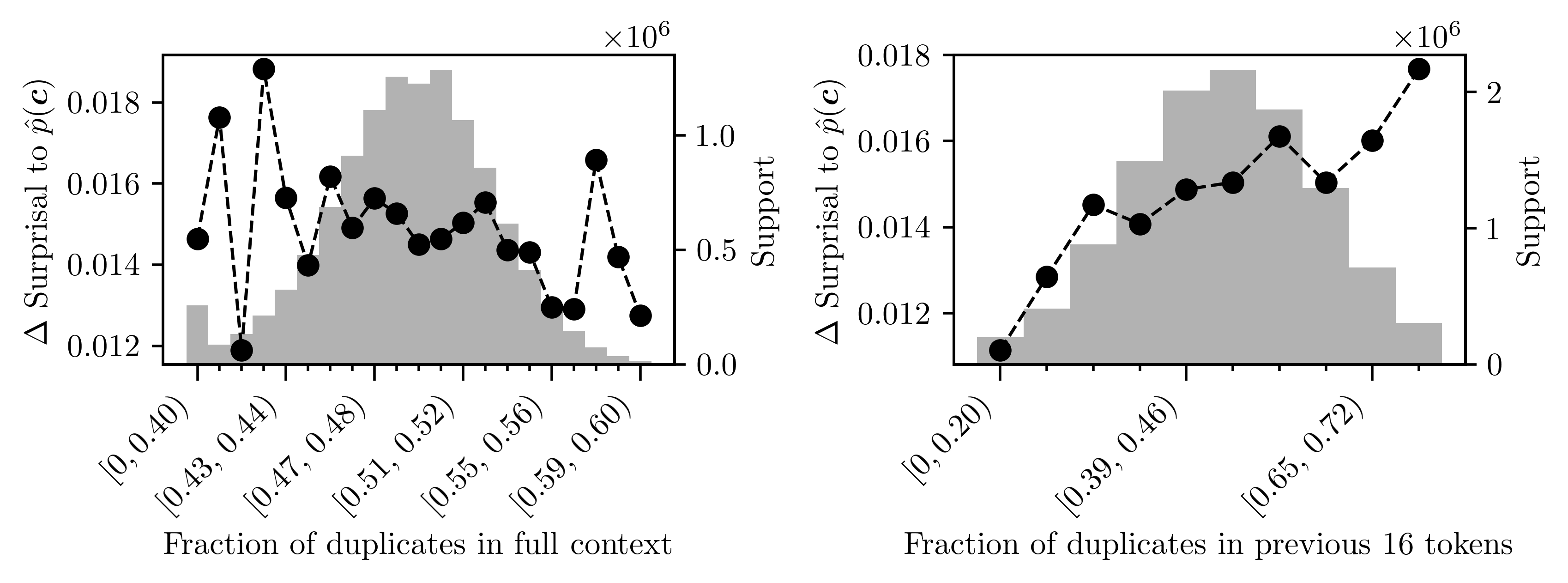}
    \vspace{-7pt}
    \caption{
    Duplication of half of the vocabulary. Difference between the surprisal assigned to each token by $\model(\words)$ and $\model(\dubwords)$, depending on fraction $\frac{\text{duplicated subwords}}{\text{non-duplicated subwords}}$ in context. 
    Fractions are categorised into bins, with average surprisal differences computed for each bin. ``Support'' shows the number of samples per bin.
    }
    \label{fig:dup_input_surprisal}
    \vspace{-10pt}
\end{figure*}

On the output side, we measure the effect of duplication by first taking the difference in surprisal (negative log probability) assigned to each original subword token $\dubword_t$ by either model $\model(\words)$ (through $\modelproj(\dubword_t \mid \words_{<t})$) or $\model(\dubwords)$ (through $\model(\dubword_t \mid \dubwords_{<t})$).
We then average these delta surprisals within either the set of actually duplicated (treatment) subwords, or the ones not duplicated (control).
We observe that subword duplication leads to an increase in surprisal which is around 50\% higher when predicting duplicated tokens compared to non-duplicated ones (see \cref{tab:dup_half_output_causal}).
This might seem unexpected, considering that, e.g., a simple n-gram model's predictions would not be affected by duplicated outputs.\footnote{An $n$-gram's probabilities are defined by count statistics, and:
$\frac{\countfunc{\dubword \concat \text{context}}}{\countfunc{\text{context}}} = \frac{\countfunc{\word \concat \text{context}}}{\countfunc{\text{context}}} + \frac{\countfunc{\word' \concat \text{context}}}{\countfunc{\text{context}}}$}
Yet, in our LM, the output embeddings of duplicated subwords receive only half as many gradient updates, which could explain the lower performance.
In line with this, we observe that infrequent subwords are affected the most by this performance loss (see \cref{fig:dup_output_surprisal_freq}), likely because their output representations are not well aligned after receiving few updates.\looseness=-1

To evaluate the impact of duplication on the LM's input side, we assess how surprisal on subwords changes depending on the number of duplicated (vs non-duplicated) subwords in its context.
Here, we observe no clear trend between the number of duplicates in our model's full context window and its performance loss on observed subwords (see \cref{fig:dup_input_surprisal}).
However, when limited to a more local context of size 16, the number of duplicate tokens seems to have a clear negative impact on prediction performance.\looseness=-1

\begin{takeaway}
Infrequent subwords are predicted worse when duplicated, and duplicated tokens in the LM's local context tend to hurt its predictions.
\end{takeaway}

\vspace{-3pt}
\subsection{Natural Duplication} \label{sec:results_natural_duplication}

After exploring the impact of perfect duplicates in LMs, we now turn our attention to the influence of naturally occurring near duplicates on LM performance.
Notably, despite their similarity, near duplicates are seldom perfectly interchangeable.
For example, \subword{\_individual} differs from \subword{\_individuals}, the \subword{\_he} in `and he writes' does not convey the same meaning as \subword{he} in `breathe' (we analyse this in \cref{app:space_near_dup}),
and a \subword{Now} at the start of a sentence can subtly vary from a \subword{now} used mid-sentence.
When merging near duplicates, we lose information about such small differences. If this lost information is relevant for predicting a token $\dubword_t = \alphabetdupsmap(\word_t)$, the task should become harder through deduplication.
We have $\mi(\Words_{<\T}; \Dubwords_{\T} \mid \Dubwords_{< \T}) > 0$ and hence
\begin{equation}
\ent(\Dubwords) > \ent_{\alphabetdupsmap}(\Words)
\end{equation}
which means that even a perfect model would perform worse on the deduplicated data distribution.
This subsection examines whether merging naturally occurring near duplicates yields comparable advantages to merging perfect duplicates.

\newcommand{\extraemb}{e_\text{non-canonical}}

\begin{table*}
\centering
\begin{tabular}{lcccc}
\toprule
      & \multicolumn{4}{c}{$\ppldedupbf$} \\ \cmidrule(lr){2-5}
\textbf{Setting} & $\whitespacedupsmap$ & $\lowerdupsmap$ & $\pluraldupsmap$ & $\alphabetdupsmap_\text{all}$\\
\midrule
$\model(\words)$$^*$                                     & 21.80 (± 0.11)    & 21.50 (± 0.11)  &  21.56 (± 0.11)  & 20.99 (± 0.11)    \\  

$\model(\words)$, 95\% data                       & 22.00     & 21.71  &  21.77   &  21.56   \\ 
$\model(\words)$, 90\% data                       & 22.21     & 21.91  &  21.97  &   21.76  \\ 
\midrule
$\model(\dubwords)$$^*$   & 22.14 (± 0.16)     & 21.87 (± 0.15) &  21.82 (± 0.11)  & 22.08 (± 0.09)\\ 
\midrule
\makecell[l]{$\model(\dubwords) + \extraemb$$^*$ }  & 21.92 (± 0.17) & 21.45 (± 0.12)  &  21.59 (± 0.14) & 21.15 (± 0.09)\\ 
\bottomrule
\end{tabular}
\caption{Impact of deduplication on $\ppldedup$. For rows with $^*$, the reported values represent means and standard deviations over four runs. Note that, for each column, $\model(\dubwords)$ refers to a different model trained under the respective $\alphabetdupsmap$.\looseness=-1}
\label{tab:dedup_basic}
\vspace{-10pt}
\end{table*}

\subsubsection{Empirical Implementation}
\label{sec:deduplication_implementation}

We consider four types of near duplicates, defined by their corresponding mapping $\alphabetdupsmap$:
\begin{description}\itemsep0em 
    \item[$\whitespacedupsmap$] ignores leading or trailing whitespace, mapping, e.g., \subword{\_the} to \subword{the}. Around 10\% of subwords are merged.
    \item[$\lowerdupsmap$] ignores capitalisation, mapping, e.g., \subword{Now} to \subword{now}. Around 12\% of subwords are merged.\looseness=-1
    \item[$\pluraldupsmap$] ignores plural markings,\footnote{For simplicity, we implement it as an easy rule-based mapping: If a subword has a trailing \subword{s} and at least four characters, not counting whitespace, we ignore the \subword{s}. While this approach yields false positives, manual inspections suggests the results are generally acceptable.} mapping, e.g., \subword{\_individuals} to \subword{\_individual}. Around 8\% of subwords are merged.
    \item[$\alldupsmap$] combines all of the previous mappings, mapping, e.g., \subword{\_Books} to \subword{book}. Around 29\% of subwords are merged.\looseness=-1
\end{description}
We train models $\modeldub(\dubwords)$ for each of these deduplication mappings $\alphabetdupsmap$ and compare them to the performance of the regular model $\model(\words)$ when evaluated under the same projection.

The first three mappings all reduce the vocabulary size by around 10\%. 
If the near duplicates were perfectly equivalent, we could expect $\modeldub(\dubwords)$ to perform marginally better than $\model(\words)$, which is what we observe in the synthetic setting with a low duplication rate (see \cref{fig:frac_dup_ppl}).
For the combined $\alldupsmap$, with 29\% near duplicates, we would expect a performance boost corresponding to roughly 5\% better data efficiency; or, alternatively, a $\ppldedup$ decrease of roughly 0.2 from $\model(\words)$ to $\modeldub(\dubwords)$ (see the results for 40\% duplication rate\footnote{A ratio of 29\% of duplicates in vocabulary $\alphabet$ corresponds to duplicating around 40\% of the initial $\Sigmaduplicated$ ($\frac{0.4}{1 + 0.4} \approx {0.29}$).} in \cref{fig:frac_dup_ppl}).

Modern large models show even higher duplication rates than our small vocabulary. 
Their vocabularies contain around 40\% near duplicates under $\alldupsmap$ (see \cref{tab:llm_dup_stats_short}). 
This translates to duplicating roughly 70\% of a canonical vocabulary; in the synthetic case, this corresponds to a potential
data efficiency increase of around 10\% due to deduplication, assuming the trends in \cref{fig:frac_dup_ppl} apply.
In the following, we examine whether this is plausible by verifying to what extent these trends transfer from perfect duplicates to near duplicates for our models.\looseness=-1

Note that, different to the previous subsection where we duplicated our standard vocabulary, here we deduplicate it; this means the $\model(\dubwords)$ in the previous subsection and $\model(\words)$ in this subsection both refer to a ``baseline" over our standard vocabulary.

\subsubsection{Raw Performance \& Duplicate Alignment}

We observe that deduplication hurts performance (see \cref{tab:dedup_basic}).
Across all considered $\alphabetdupsmap$, training $\model(\dubwords)$ on deduplicated data yields worse results than the setting $\model(\words)$ in which near duplicates remain untouched.
The performance degradation is equivalent to training on 5-10\% less data.
This is in stark contrast to the trends we observed on perfectly equivalent synthetic duplicates, where deduplication boosts performance. 
Near duplicates thus seem to be less equivalent than one might expect, noticeably differing in their semantics.\looseness=-1

The near duplicates' learned embeddings, which tend to contain information about the subwords' semantics, reflect this discrepancy.
Near duplicates’ embeddings are not nearly as similar as the embeddings of synthetic perfect duplicates, which had a cosine similarity of 0.8: all three types of natural duplicates exhibit average cosine similarities of around 0.4.
This indicates that the model perceives semantic differences between the near duplicates. 
As described earlier, if we merge them, we will lose information. 
The associated decline in performance indicates that the information is significant.

\begin{takeaway}
Near duplicates are not equivalent and merging them hurts performance.
\end{takeaway}

\begin{table}[t]
\centering
\begin{tabular}{lcc}
\hline
 & \multicolumn{2}{c}{\textbf{Mean $\mathbf{\Delta}$ Surprisal}} \\\cmidrule(lr){2-3}
\textbf{Setting} & Deduplicated  & Not Deduplicated \\
\hline
$\whitespacedupsmap$       & 0.014  & 0.010 \\
$\lowerdupsmap$            & 0.009  & 0.008 \\
$\pluraldupsmap$           & 0.010  & 0.009\\
\hline
\end{tabular}
\vspace{-5pt}
\caption{
Deduplication of half of the vocabulary. Mean difference to baseline surprisal, for subwords in treatment (duplicated) and control (not duplicated) group.
}
\label{tab:dedup_half_output_causal}
\vspace{-10pt}
\end{table}

\subsubsection{Comparing Deduplicating or Not a Pair}

Are predictions worse when more deduplicated tokens are in the context, since more information is ``lost''?
We investigate this again through a controlled experiment. 
We run experiments where we deduplicate only half of the near duplicates, obtaining a deduplicated treatment group and an unchanged control group.
This setup is similar to the one in the previous subsection, but treatment and control group do not cover the entire vocabulary; both consist only of subwords that have a near duplicate. 
To isolate the causal effect of duplication, we investigate differences between these groups.

We first compare the effects of the number of deduplicated subwords and non-deduplicated subwords in the LMs context, i.e., measuring the effect of deduplication on the input side. 
An increased number of deduplicated subwords seems to generally lead to a steeper increase in surprisal than an increased number of non-deduplicated subwords, at least in the local context (see \cref{fig:dedup_input_causal_surprisal} in \cref{app:dedup_subwords_in_context}).
This suggests that deduplicated subwords in the context do hurt performance, presumably due to the lost information.\looseness=-1

Interestingly, deduplicated subwords are also predicted slightly worse on the output side (see \cref{tab:dedup_half_output_causal}).
In this setting, the model is forced to learn only one output embedding for each pair of merged near duplicates. 
If these merged subwords have different meanings, finding a single embedding to represent both of them might be challenging, as this embedding would need to produce large dot products with the hidden states of contexts from both subwords.\looseness=-1

\begin{takeaway}
    The presence of merged near duplicates in the local context tends to reduce an LM's prediction accuracy.
\end{takeaway}

\subsubsection{Re-adding Information}

If the missing information in the input causes worse predictions, can we improve performance by re-injecting it? If we provide the model with an extra input that allows it to distinguish between merged near duplicates while still sharing their embeddings, could we obtain the benefits of deduplication without the downside of losing information?

To investigate this, we introduce an extra learnable embedding $\extraemb$.
This is a single shared vector which is added to the model's input at every token that corresponds to a non-canonical subword.
(A non-canonical subword $\word$ is any subword where $\alphabetdupsmap(\word) \neq \word$, e.g., \subword{Now} when operating under $\lowerdupsmap$, since $\lowerdupsmap(\text{\subword{Now}}) = \text{\subword{now}} \neq \text{\subword{Now}}$.)
As $\alldupsmap$ combines the three different types of deduplication, we use three different learnable embeddings for the $\alldupsmap$ setting, one for each type.

When comparing $\model(\dubwords)$ with and without $\extraemb$, such an embedding's availability significantly improves performance ($\mathrm{p} \leq 0.05$ in one-sided t-test) in all four deduplication settings (see \cref{tab:dedup_basic}).
Further, $\modeldub(\dubwords)$ with $\extraemb$ roughly matches $\model(\words)$ performance for $\whitespacedupsmap$, $\lowerdupsmap$, and $\pluraldupsmap$ (differences not statistically signficant with $\mathrm{p} = 0.22, 0.48, 0.72$, respectively, in two-sided t-test).
However, we still do not see the same performance improvements due to deduplication as we observed in the synthetic setting; this is especially clear for $\alldupsmap$ which has a higher deduplication rate, but clearly fails to improve performance ($\modeldub(\dubwords)$ with $\extraemb$ is significantly worse than $\model(\words)$ with $\mathrm{p} \leq 0.05$ in two-sided t-test).
Presumably, even if they belong to the same type ($\whitespacedupsmap$, $\lowerdupsmap$, or $\pluraldupsmap$), near duplicate pairs' semantic differences are diverse and can thus not be fully captured by a single shared embedding vector.\looseness=-1

\begin{takeaway}
    Performance losses can be mitigated by accounting for semantic differences of near duplicates via a shared learned embedding.
    Still, this approach does not achieve the same benefits as observed when merging perfectly equivalent duplicates.\looseness=-1
\end{takeaway}

\section{Related Work}

Our experiments are partly inspired by a number of works on cross-lingual LM generalisation.
These works also use duplicated vocabularies, terming the duplicates ``fake-english''.
Unlike our work, however, they sample entire sequences of either English or ``fake-english'' tokens \citep[we perform token-wise i.i.d. sampling;][]{wang2019cross, dufter2020identifying, schafer2024language}. Relatedly, \citeposs{huang2023lexinvariant} lex-invariant LMs can be interpreted as an extreme case of token duplication; essentially creating infinite ``fake languages'', they show that LMs can learn, to a certain extent, even in the absence of a fixed set of embeddings.
More similar to our duplication method is the work of \citet{kharitonov2021bpe}, who use duplicates sampled at the token level to isolate the effect of vocabulary size when investigating the role of BPE tokenisation in a model's ability to memorise training data.\looseness=-1

Another set of related work studies subword regularisation techniques, such as, e.g.,  BPE dropout \cite{kudo-2018-subword, provilkov-etal-2020-bpe}.
Certain character sequences can be represented by multiple equivalent sequences of subwords.
These techniques then non-deterministically choose between these equivalent choices at training time, which can lead to improved LMs.
Our perfect duplication experiments, where we train LMs with either of two duplicated tokens, can be seen as similar to these methods.
Unlike them, however, our approach duplicates a model's vocabulary to introduce this ambiguity, which may explain the fact that our results diverge.
Relatedly, our projected entropy measure (which sums over subword duplicates) is analogous to the marginalisation proposed by \citet{cao-rimell-2021-evaluate}, which sums over spurious tokenisations.

\section{Conclusion}

In this paper, we investigate to what extent (near) duplicate subwords impact language modelling performance, conducting controlled experiments on synthetic perfect duplicates and natural near duplicates. 
We find that LMs can generalise across duplicated subwords, although this incurs extra cost. When operating on a fully duplicated vocabulary, the LM is about 17\% less data efficient.
This number depends roughly linearly on the fraction of the vocabulary that is duplicated.
Assuming that roughly 40\% of subwords are near duplicates in common LM vocabularies, our findings imply that LMs with improved generalisation across duplicates, e.g., by modelling language at the character-level, could achieve data efficiency gains of up to 10\%. 
This bound is reached for perfectly equivalent duplicates. 
However, we find that natural near duplicates are \emph{not} perfectly equivalent: in practice, the potential for such performance improvements is likely limited.

\section*{Limitations}

We conduct most of our experiments on models with about 100M parameters, training on roughly 1.2B tokens in English.
Although the trends we identify are consistent in up to 3x larger models and datasets, it is uncertain whether they extend to the scale of modern large language models.\footnote{Relatedly, \citet{bandel-etal-2022-lexical} find that larger models trained over extended periods tend to exhibit decreased dependency on lexical overlap.}
Furthermore, we have not validated that our results transfer to languages beyond English.

When we study the causal effects of (de)duplication on the input and output side, we do not fully isolate the two effects from each other.
After a manual inspection, it appears that this does not confound the results; e.g., duplicated subwords are not more frequent in the input/context of duplicated subwords than non-duplicated subwords are.
However, to fully exclude the possibility that such patterns affect results, one should run experiments where only the inputs or only the outputs are (de)duplicated.

Finally, while studying natural near duplicates, we only investigate deduplication mappings that can be described by simple heuristics. 
Alternatively, one could use more involved methods to also deduplicate, e.g., typos or wordform variations beyond plural forms.

\section*{Acknowledgements}

The authors would like to thank Shauli Ravfogel, Andreas Opedal, Marius Mosbach, and the anonymous reviewers for feedback on earlier versions of this manuscript.

\bibliography{custom}

\begin{thebibliography}{35}
\expandafter\ifx\csname natexlab\endcsname\relax\def\natexlab#1{#1}\fi

\bibitem[{Anthropic(2023)}]{claude2}
Anthropic. 2023.
\newblock \href
  {https://www-cdn.anthropic.com/bd2a28d2535bfb0494cc8e2a3bf135d2e7523226/Model-Card-Claude-2.pdf}
  {Model card and evaluations for claude models}.
\newblock \emph{Anthropic Blog}.

\bibitem[{Bandel et~al.(2022)Bandel, Goldberg, and
  Elazar}]{bandel-etal-2022-lexical}
Elron Bandel, Yoav Goldberg, and Yanai Elazar. 2022.
\newblock \href {https://doi.org/10.18653/v1/2022.findings-emnlp.323} {Lexical
  generalization improves with larger models and longer training}.
\newblock In \emph{Findings of the Association for Computational Linguistics:
  EMNLP 2022}, pages 4398--4410, Abu Dhabi, United Arab Emirates. Association
  for Computational Linguistics.

\bibitem[{Brown et~al.(2020)Brown, Mann, Ryder, Subbiah, Kaplan, Dhariwal,
  Neelakantan, Shyam, Sastry, Askell, Agarwal, Herbert-Voss, Krueger, Henighan,
  Child, Ramesh, Ziegler, Wu, Winter, Hesse, Chen, Sigler, Litwin, Gray, Chess,
  Clark, Berner, McCandlish, Radford, Sutskever, and
  Amodei}]{brown2020language}
Tom Brown, Benjamin Mann, Nick Ryder, Melanie Subbiah, Jared~D Kaplan, Prafulla
  Dhariwal, Arvind Neelakantan, Pranav Shyam, Girish Sastry, Amanda Askell,
  Sandhini Agarwal, Ariel Herbert-Voss, Gretchen Krueger, Tom Henighan, Rewon
  Child, Aditya Ramesh, Daniel Ziegler, Jeffrey Wu, Clemens Winter, Chris
  Hesse, Mark Chen, Eric Sigler, Mateusz Litwin, Scott Gray, Benjamin Chess,
  Jack Clark, Christopher Berner, Sam McCandlish, Alec Radford, Ilya Sutskever,
  and Dario Amodei. 2020.
\newblock \href
  {https://proceedings.neurips.cc/paper_files/paper/2020/file/1457c0d6bfcb4967418bfb8ac142f64a-Paper.pdf}
  {Language models are few-shot learners}.
\newblock In \emph{Advances in Neural Information Processing Systems},
  volume~33, pages 1877--1901. Curran Associates, Inc.

\bibitem[{Cammarata et~al.(2020)Cammarata, Carter, Goh, Olah, Petrov, Schubert,
  Voss, Egan, and Lim}]{cammarata2020thread}
Nick Cammarata, Shan Carter, Gabriel Goh, Chris Olah, Michael Petrov, Ludwig
  Schubert, Chelsea Voss, Ben Egan, and Swee~Kiat Lim. 2020.
\newblock \href {https://doi.org/10.23915/distill.00024} {Thread: Circuits}.
\newblock \emph{Distill}.

\bibitem[{Cao and Rimell(2021)}]{cao-rimell-2021-evaluate}
Kris Cao and Laura Rimell. 2021.
\newblock \href {https://doi.org/10.18653/v1/2021.emnlp-main.161} {You should
  evaluate your language model on marginal likelihood over tokenisations}.
\newblock In \emph{Proceedings of the 2021 Conference on Empirical Methods in
  Natural Language Processing}, pages 2104--2114, Online and Punta Cana,
  Dominican Republic. Association for Computational Linguistics.

\bibitem[{Clark et~al.(2022)Clark, Garrette, Turc, and
  Wieting}]{clark-etal-2022-canine}
Jonathan~H. Clark, Dan Garrette, Iulia Turc, and John Wieting. 2022.
\newblock \href {https://doi.org/10.1162/tacl_a_00448} {{CANINE}: Pre-training
  an efficient tokenization-free encoder for language representation}.
\newblock \emph{Transactions of the Association for Computational Linguistics},
  10:73--91.

\bibitem[{Dufter and Sch{\"u}tze(2020)}]{dufter2020identifying}
Philipp Dufter and Hinrich Sch{\"u}tze. 2020.
\newblock \href {https://aclanthology.org/2020.emnlp-main.358.pdf} {Identifying
  elements essential for {BERT}’s multilinguality}.
\newblock In \emph{Proceedings of the 2020 Conference on Empirical Methods in
  Natural Language Processing (EMNLP)}, pages 4423--4437.

\bibitem[{Elhage et~al.(2021)Elhage, Nanda, Olsson, Henighan, Joseph, Mann,
  Askell, Bai, Chen, Conerly, DasSarma, Drain, Ganguli, Hatfield-Dodds,
  Hernandez, Jones, Kernion, Lovitt, Ndousse, Amodei, Brown, Clark, Kaplan,
  McCandlish, and Olah}]{elhage2021mathematical}
Nelson Elhage, Neel Nanda, Catherine Olsson, Tom Henighan, Nicholas Joseph, Ben
  Mann, Amanda Askell, Yuntao Bai, Anna Chen, Tom Conerly, Nova DasSarma, Dawn
  Drain, Deep Ganguli, Zac Hatfield-Dodds, Danny Hernandez, Andy Jones, Jackson
  Kernion, Liane Lovitt, Kamal Ndousse, Dario Amodei, Tom Brown, Jack Clark,
  Jared Kaplan, Sam McCandlish, and Chris Olah. 2021.
\newblock \href {https://transformer-circuits.pub/2021/framework/index.html} {A
  mathematical framework for transformer circuits}.
\newblock \emph{Transformer Circuits Thread}.

\bibitem[{Gage(1994)}]{gage1994new}
Philip Gage. 1994.
\newblock A new algorithm for data compression.
\newblock \emph{C Users Journal}, 12(2):23--38.

\bibitem[{Gao et~al.(2020)Gao, Biderman, Black, Golding, Hoppe, Foster, Phang,
  He, Thite, Nabeshima, Presser, and Leahy}]{gao2020pile}
Leo Gao, Stella Biderman, Sid Black, Laurence Golding, Travis Hoppe, Charles
  Foster, Jason Phang, Horace He, Anish Thite, Noa Nabeshima, Shawn Presser,
  and Connor Leahy. 2020.
\newblock \href {https://arxiv.org/pdf/2101.00027.pdf} {The {Pile}: An 800{GB}
  dataset of diverse text for language modeling}.
\newblock \emph{arXiv preprint arXiv:2101.00027}.

\bibitem[{{Gemma Team} et~al.(2024){Gemma Team}, Mesnard, Hardin, Dadashi,
  Bhupatiraju, Pathak, Sifre, Rivière, Kale, Love, Tafti, Hussenot, Chowdhery,
  Roberts, Barua, Botev, Castro-Ros, Slone, Héliou, Tacchetti, Bulanova,
  Paterson, Tsai, Shahriari, Lan, Choquette-Choo, Crepy, Cer, Ippolito, Reid,
  Buchatskaya, Ni, Noland, Yan, Tucker, Muraru, Rozhdestvenskiy, Michalewski,
  Tenney, Grishchenko, Austin, Keeling, Labanowski, Lespiau, Stanway, Brennan,
  Chen, Ferret, Chiu, Mao-Jones, Lee, Yu, Millican, Sjoesund, Lee, Dixon, Reid,
  Mikuła, Wirth, Sharman, Chinaev, Thain, Bachem, Chang, Wahltinez, Bailey,
  Michel, Yotov, Sessa, Chaabouni, Comanescu, Jana, Anil, McIlroy, Liu,
  Mullins, Smith, Borgeaud, Girgin, Douglas, Pandya, Shakeri, De, Klimenko,
  Hennigan, Feinberg, Stokowiec, hui Chen, Ahmed, Gong, Warkentin, Peran,
  Giang, Farabet, Vinyals, Dean, Kavukcuoglu, Hassabis, Ghahramani, Eck,
  Barral, Pereira, Collins, Joulin, Fiedel, Senter, Andreev, and
  Kenealy}]{gemma}
{Gemma Team}, Thomas Mesnard, Cassidy Hardin, Robert Dadashi, Surya
  Bhupatiraju, Shreya Pathak, Laurent Sifre, Morgane Rivière, Mihir~Sanjay
  Kale, Juliette Love, Pouya Tafti, Léonard Hussenot, Aakanksha Chowdhery,
  Adam Roberts, Aditya Barua, Alex Botev, Alex Castro-Ros, Ambrose Slone,
  Amélie Héliou, Andrea Tacchetti, Anna Bulanova, Antonia Paterson, Beth
  Tsai, Bobak Shahriari, Charline~Le Lan, Christopher~A. Choquette-Choo,
  Clément Crepy, Daniel Cer, Daphne Ippolito, David Reid, Elena Buchatskaya,
  Eric Ni, Eric Noland, Geng Yan, George Tucker, George-Christian Muraru,
  Grigory Rozhdestvenskiy, Henryk Michalewski, Ian Tenney, Ivan Grishchenko,
  Jacob Austin, James Keeling, Jane Labanowski, Jean-Baptiste Lespiau, Jeff
  Stanway, Jenny Brennan, Jeremy Chen, Johan Ferret, Justin Chiu, Justin
  Mao-Jones, Katherine Lee, Kathy Yu, Katie Millican, Lars~Lowe Sjoesund, Lisa
  Lee, Lucas Dixon, Machel Reid, Maciej Mikuła, Mateo Wirth, Michael Sharman,
  Nikolai Chinaev, Nithum Thain, Olivier Bachem, Oscar Chang, Oscar Wahltinez,
  Paige Bailey, Paul Michel, Petko Yotov, Pier~Giuseppe Sessa, Rahma Chaabouni,
  Ramona Comanescu, Reena Jana, Rohan Anil, Ross McIlroy, Ruibo Liu, Ryan
  Mullins, Samuel~L Smith, Sebastian Borgeaud, Sertan Girgin, Sholto Douglas,
  Shree Pandya, Siamak Shakeri, Soham De, Ted Klimenko, Tom Hennigan, Vlad
  Feinberg, Wojciech Stokowiec, Yu~hui Chen, Zafarali Ahmed, Zhitao Gong, Tris
  Warkentin, Ludovic Peran, Minh Giang, Clément Farabet, Oriol Vinyals, Jeff
  Dean, Koray Kavukcuoglu, Demis Hassabis, Zoubin Ghahramani, Douglas Eck,
  Joelle Barral, Fernando Pereira, Eli Collins, Armand Joulin, Noah Fiedel,
  Evan Senter, Alek Andreev, and Kathleen Kenealy. 2024.
\newblock \href
  {https://storage.googleapis.com/deepmind-media/gemma/gemma-report.pdf}
  {Gemma: Open models based on gemini research and technology}.
\newblock \emph{Google Blog}.

\bibitem[{Huang et~al.(2023)Huang, Zelikman, Chen, Wu, Valiant, and
  Liang}]{huang2023lexinvariant}
Qian Huang, Eric Zelikman, Sarah~Li Chen, Yuhuai Wu, Gregory Valiant, and Percy
  Liang. 2023.
\newblock \href {https://openreview.net/forum?id=NiQTy0NW1L} {Lexinvariant
  language models}.
\newblock In \emph{Thirty-seventh Conference on Neural Information Processing
  Systems}.

\bibitem[{Jiang et~al.(2023)Jiang, Sablayrolles, Mensch, Bamford, Chaplot,
  de~las Casas, Bressand, Lengyel, Lample, Saulnier, Lavaud, Lachaux, Stock,
  Scao, Lavril, Wang, Lacroix, and Sayed}]{jiang2023mistral}
Albert~Q. Jiang, Alexandre Sablayrolles, Arthur Mensch, Chris Bamford,
  Devendra~Singh Chaplot, Diego de~las Casas, Florian Bressand, Gianna Lengyel,
  Guillaume Lample, Lucile Saulnier, Lélio~Renard Lavaud, Marie-Anne Lachaux,
  Pierre Stock, Teven~Le Scao, Thibaut Lavril, Thomas Wang, Timothée Lacroix,
  and William~El Sayed. 2023.
\newblock \href {https://arxiv.org/abs/2310.06825} {Mistral 7{B}}.
\newblock \emph{arXiv preprint arXiv:2310.06825}.

\bibitem[{Jiang et~al.(2024)Jiang, Sablayrolles, Roux, Mensch, Savary, Bamford,
  Chaplot, de~las Casas, Hanna, Bressand, Lengyel, Bour, Lample, Lavaud,
  Saulnier, Lachaux, Stock, Subramanian, Yang, Antoniak, Scao, Gervet, Lavril,
  Wang, Lacroix, and Sayed}]{jiang2024mixtral}
Albert~Q. Jiang, Alexandre Sablayrolles, Antoine Roux, Arthur Mensch, Blanche
  Savary, Chris Bamford, Devendra~Singh Chaplot, Diego de~las Casas, Emma~Bou
  Hanna, Florian Bressand, Gianna Lengyel, Guillaume Bour, Guillaume Lample,
  Lélio~Renard Lavaud, Lucile Saulnier, Marie-Anne Lachaux, Pierre Stock,
  Sandeep Subramanian, Sophia Yang, Szymon Antoniak, Teven~Le Scao, Théophile
  Gervet, Thibaut Lavril, Thomas Wang, Timothée Lacroix, and William~El Sayed.
  2024.
\newblock \href {http://arxiv.org/abs/2401.04088} {Mixtral of experts}.
\newblock \emph{arXiv preprint arXiv:2401.04088}.

\bibitem[{K et~al.(2020)K, Wang, Mayhew, and Roth}]{wang2019cross}
Karthikeyan K, Zihan Wang, Stephen Mayhew, and Dan Roth. 2020.
\newblock \href {https://openreview.net/forum?id=HJeT3yrtDr} {Cross-lingual
  ability of multilingual {BERT}: An empirical study}.
\newblock In \emph{International Conference on Learning Representations}.

\bibitem[{Kharitonov et~al.(2021)Kharitonov, Baroni, and
  Hupkes}]{kharitonov2021bpe}
Eugene Kharitonov, Marco Baroni, and Dieuwke Hupkes. 2021.
\newblock \href {https://arxiv.org/abs/2110.02782} {How {BPE} affects
  memorization in transformers}.
\newblock \emph{arXiv preprint arXiv:2110.02782}.

\bibitem[{Kim et~al.(2016)Kim, Jernite, Sontag, and Rush}]{kim2016character}
Yoon Kim, Yacine Jernite, David Sontag, and Alexander Rush. 2016.
\newblock \href {https://arxiv.org/pdf/1508.06615.pdf} {Character-aware neural
  language models}.
\newblock In \emph{Proceedings of the AAAI Conference on Artificial
  Intelligence}, volume~30.

\bibitem[{Kingma and Ba(2015)}]{kingma2015adam}
Diederik Kingma and Jimmy Ba. 2015.
\newblock \href {https://arxiv.org/pdf/1412.6980.pdf} {Adam: A method for
  stochastic optimization}.
\newblock In \emph{International Conference on Learning Representations}, San
  Diega, CA, USA.

\bibitem[{Kudo(2018)}]{kudo-2018-subword}
Taku Kudo. 2018.
\newblock \href {https://doi.org/10.18653/v1/P18-1007} {Subword regularization:
  Improving neural network translation models with multiple subword
  candidates}.
\newblock In \emph{Proceedings of the 56th Annual Meeting of the Association
  for Computational Linguistics (Volume 1: Long Papers)}, pages 66--75,
  Melbourne, Australia. Association for Computational Linguistics.

\bibitem[{Kudo and Richardson(2018)}]{kudo2018sentencepiece}
Taku Kudo and John Richardson. 2018.
\newblock \href {https://arxiv.org/pdf/1808.06226.pdf} {{SentencePiece}: A
  simple and language independent subword tokenizer and detokenizer for neural
  text processing}.
\newblock In \emph{Proceedings of the 2018 Conference on Empirical Methods in
  Natural Language Processing: System Demonstrations}, pages 66--71.

\bibitem[{Mikolov et~al.(2013)Mikolov, Chen, Corrado, and
  Dean}]{mikolov2013efficient}
Tom{\'{a}}s Mikolov, Kai Chen, Greg Corrado, and Jeffrey Dean. 2013.
\newblock \href {http://arxiv.org/abs/1301.3781} {Efficient estimation of word
  representations in vector space}.
\newblock In \emph{International Conference on Learning Representations,
  Workshop Track Proceedings}, Scottsdale, Arizona, USA.

\bibitem[{OpenAI et~al.(2023)OpenAI, Achiam, Adler, Agarwal, Ahmad, Akkaya,
  Aleman, Almeida, Altenschmidt, Altman, Anadkat, Avila, Babuschkin, Balaji,
  Balcom, Baltescu, Bao, Bavarian, Belgum, Bello, Berdine, Bernadett-Shapiro,
  Berner, Bogdonoff, Boiko, Boyd, Brakman, Brockman, Brooks, Brundage, Button,
  Cai, Campbell, Cann, Carey, Carlson, Carmichael, Chan, Chang, Chantzis, Chen,
  Chen, Chen, Chen, Chen, Chess, Cho, Chu, Chung, Cummings, Currier, Dai,
  Decareaux, Degry, Deutsch, Deville, Dhar, Dohan, Dowling, Dunning, Ecoffet,
  Eleti, Eloundou, Farhi, Fedus, Felix, Fishman, Forte, Fulford, Gao, Georges,
  Gibson, Goel, Gogineni, Goh, Gontijo-Lopes, Gordon, Grafstein, Gray, Greene,
  Gross, Gu, Guo, Hallacy, Han, Harris, He, Heaton, Heidecke, Hesse, Hickey,
  Hickey, Hoeschele, Houghton, Hsu, Hu, Hu, Huizinga, Jain, Jain, Jang, Jiang,
  Jiang, Jin, Jin, Jomoto, Jonn, Jun, Kaftan, Łukasz Kaiser, Kamali,
  Kanitscheider, Keskar, Khan, Kilpatrick, Kim, Kim, Kim, Kirchner, Kiros,
  Knight, Kokotajlo, Łukasz Kondraciuk, Kondrich, Konstantinidis, Kosic,
  Krueger, Kuo, Lampe, Lan, Lee, Leike, Leung, Levy, Li, Lim, Lin, Lin, Litwin,
  Lopez, Lowe, Lue, Makanju, Malfacini, Manning, Markov, Markovski, Martin,
  Mayer, Mayne, McGrew, McKinney, McLeavey, McMillan, McNeil, Medina, Mehta,
  Menick, Metz, Mishchenko, Mishkin, Monaco, Morikawa, Mossing, Mu, Murati,
  Murk, Mély, Nair, Nakano, Nayak, Neelakantan, Ngo, Noh, Ouyang, O'Keefe,
  Pachocki, Paino, Palermo, Pantuliano, Parascandolo, Parish, Parparita,
  Passos, Pavlov, Peng, Perelman, de~Avila Belbute~Peres, Petrov,
  de~Oliveira~Pinto, Michael, Pokorny, Pokrass, Pong, Powell, Power, Power,
  Proehl, Puri, Radford, Rae, Ramesh, Raymond, Real, Rimbach, Ross, Rotsted,
  Roussez, Ryder, Saltarelli, Sanders, Santurkar, Sastry, Schmidt, Schnurr,
  Schulman, Selsam, Sheppard, Sherbakov, Shieh, Shoker, Shyam, Sidor, Sigler,
  Simens, Sitkin, Slama, Sohl, Sokolowsky, Song, Staudacher, Such, Summers,
  Sutskever, Tang, Tezak, Thompson, Tillet, Tootoonchian, Tseng, Tuggle,
  Turley, Tworek, Uribe, Vallone, Vijayvergiya, Voss, Wainwright, Wang, Wang,
  Wang, Ward, Wei, Weinmann, Welihinda, Welinder, Weng, Weng, Wiethoff,
  Willner, Winter, Wolrich, Wong, Workman, Wu, Wu, Wu, Xiao, Xu, Yoo, Yu, Yuan,
  Zaremba, Zellers, Zhang, Zhang, Zhao, Zheng, Zhuang, Zhuk, and Zoph}]{gpt4}
OpenAI, Josh Achiam, Steven Adler, Sandhini Agarwal, Lama Ahmad, Ilge Akkaya,
  Florencia~Leoni Aleman, Diogo Almeida, Janko Altenschmidt, Sam Altman,
  Shyamal Anadkat, Red Avila, Igor Babuschkin, Suchir Balaji, Valerie Balcom,
  Paul Baltescu, Haiming Bao, Mohammad Bavarian, Jeff Belgum, Irwan Bello, Jake
  Berdine, Gabriel Bernadett-Shapiro, Christopher Berner, Lenny Bogdonoff, Oleg
  Boiko, Madelaine Boyd, Anna-Luisa Brakman, Greg Brockman, Tim Brooks, Miles
  Brundage, Kevin Button, Trevor Cai, Rosie Campbell, Andrew Cann, Brittany
  Carey, Chelsea Carlson, Rory Carmichael, Brooke Chan, Che Chang, Fotis
  Chantzis, Derek Chen, Sully Chen, Ruby Chen, Jason Chen, Mark Chen, Ben
  Chess, Chester Cho, Casey Chu, Hyung~Won Chung, Dave Cummings, Jeremiah
  Currier, Yunxing Dai, Cory Decareaux, Thomas Degry, Noah Deutsch, Damien
  Deville, Arka Dhar, David Dohan, Steve Dowling, Sheila Dunning, Adrien
  Ecoffet, Atty Eleti, Tyna Eloundou, David Farhi, Liam Fedus, Niko Felix,
  Simón~Posada Fishman, Juston Forte, Isabella Fulford, Leo Gao, Elie Georges,
  Christian Gibson, Vik Goel, Tarun Gogineni, Gabriel Goh, Rapha Gontijo-Lopes,
  Jonathan Gordon, Morgan Grafstein, Scott Gray, Ryan Greene, Joshua Gross,
  Shixiang~Shane Gu, Yufei Guo, Chris Hallacy, Jesse Han, Jeff Harris, Yuchen
  He, Mike Heaton, Johannes Heidecke, Chris Hesse, Alan Hickey, Wade Hickey,
  Peter Hoeschele, Brandon Houghton, Kenny Hsu, Shengli Hu, Xin Hu, Joost
  Huizinga, Shantanu Jain, Shawn Jain, Joanne Jang, Angela Jiang, Roger Jiang,
  Haozhun Jin, Denny Jin, Shino Jomoto, Billie Jonn, Heewoo Jun, Tomer Kaftan,
  Łukasz Kaiser, Ali Kamali, Ingmar Kanitscheider, Nitish~Shirish Keskar,
  Tabarak Khan, Logan Kilpatrick, Jong~Wook Kim, Christina Kim, Yongjik Kim,
  Jan~Hendrik Kirchner, Jamie Kiros, Matt Knight, Daniel Kokotajlo, Łukasz
  Kondraciuk, Andrew Kondrich, Aris Konstantinidis, Kyle Kosic, Gretchen
  Krueger, Vishal Kuo, Michael Lampe, Ikai Lan, Teddy Lee, Jan Leike, Jade
  Leung, Daniel Levy, Chak~Ming Li, Rachel Lim, Molly Lin, Stephanie Lin,
  Mateusz Litwin, Theresa Lopez, Ryan Lowe, Patricia Lue, Anna Makanju, Kim
  Malfacini, Sam Manning, Todor Markov, Yaniv Markovski, Bianca Martin, Katie
  Mayer, Andrew Mayne, Bob McGrew, Scott~Mayer McKinney, Christine McLeavey,
  Paul McMillan, Jake McNeil, David Medina, Aalok Mehta, Jacob Menick, Luke
  Metz, Andrey Mishchenko, Pamela Mishkin, Vinnie Monaco, Evan Morikawa, Daniel
  Mossing, Tong Mu, Mira Murati, Oleg Murk, David Mély, Ashvin Nair, Reiichiro
  Nakano, Rajeev Nayak, Arvind Neelakantan, Richard Ngo, Hyeonwoo Noh, Long
  Ouyang, Cullen O'Keefe, Jakub Pachocki, Alex Paino, Joe Palermo, Ashley
  Pantuliano, Giambattista Parascandolo, Joel Parish, Emy Parparita, Alex
  Passos, Mikhail Pavlov, Andrew Peng, Adam Perelman, Filipe de~Avila
  Belbute~Peres, Michael Petrov, Henrique~Ponde de~Oliveira~Pinto, Michael,
  Pokorny, Michelle Pokrass, Vitchyr~H. Pong, Tolly Powell, Alethea Power,
  Boris Power, Elizabeth Proehl, Raul Puri, Alec Radford, Jack Rae, Aditya
  Ramesh, Cameron Raymond, Francis Real, Kendra Rimbach, Carl Ross, Bob
  Rotsted, Henri Roussez, Nick Ryder, Mario Saltarelli, Ted Sanders, Shibani
  Santurkar, Girish Sastry, Heather Schmidt, David Schnurr, John Schulman,
  Daniel Selsam, Kyla Sheppard, Toki Sherbakov, Jessica Shieh, Sarah Shoker,
  Pranav Shyam, Szymon Sidor, Eric Sigler, Maddie Simens, Jordan Sitkin,
  Katarina Slama, Ian Sohl, Benjamin Sokolowsky, Yang Song, Natalie Staudacher,
  Felipe~Petroski Such, Natalie Summers, Ilya Sutskever, Jie Tang, Nikolas
  Tezak, Madeleine~B. Thompson, Phil Tillet, Amin Tootoonchian, Elizabeth
  Tseng, Preston Tuggle, Nick Turley, Jerry Tworek, Juan Felipe~Cerón Uribe,
  Andrea Vallone, Arun Vijayvergiya, Chelsea Voss, Carroll Wainwright,
  Justin~Jay Wang, Alvin Wang, Ben Wang, Jonathan Ward, Jason Wei, CJ~Weinmann,
  Akila Welihinda, Peter Welinder, Jiayi Weng, Lilian Weng, Matt Wiethoff, Dave
  Willner, Clemens Winter, Samuel Wolrich, Hannah Wong, Lauren Workman, Sherwin
  Wu, Jeff Wu, Michael Wu, Kai Xiao, Tao Xu, Sarah Yoo, Kevin Yu, Qiming Yuan,
  Wojciech Zaremba, Rowan Zellers, Chong Zhang, Marvin Zhang, Shengjia Zhao,
  Tianhao Zheng, Juntang Zhuang, William Zhuk, and Barret Zoph. 2023.
\newblock \href {https://openai.com/research/gpt-4} {{GPT}-4 technical report}.
\newblock \emph{{OpenAI} Blog}.

\bibitem[{Provilkov et~al.(2020)Provilkov, Emelianenko, and
  Voita}]{provilkov-etal-2020-bpe}
Ivan Provilkov, Dmitrii Emelianenko, and Elena Voita. 2020.
\newblock \href {https://doi.org/10.18653/v1/2020.acl-main.170} {{BPE}-dropout:
  Simple and effective subword regularization}.
\newblock In \emph{Proceedings of the 58th Annual Meeting of the Association
  for Computational Linguistics}, pages 1882--1892, Online. Association for
  Computational Linguistics.

\bibitem[{Radford et~al.(2019)Radford, Wu, Child, Luan, Amodei, and
  Sutskever}]{radford2019language}
Alec Radford, Jeffrey Wu, Rewon Child, David Luan, Dario Amodei, and Ilya
  Sutskever. 2019.
\newblock \href
  {https://cdn.openai.com/better-language-models/language_models_are_unsupervised_multitask_learners.pdf}
  {Language models are unsupervised multitask learners}.
\newblock \emph{OpenAI Blog}.

\bibitem[{Rozner et~al.(2021)Rozner, Potts, and
  Mahowald}]{rozner2021decrypting}
Joshua Rozner, Christopher Potts, and Kyle Mahowald. 2021.
\newblock \href {https://openreview.net/forum?id=Ah5CMODl52} {Decrypting
  cryptic crosswords: Semantically complex wordplay puzzles as a target for
  {NLP}}.
\newblock In \emph{Advances in Neural Information Processing Systems}.

\bibitem[{Rust et~al.(2023)Rust, Lotz, Bugliarello, Salesky, de~Lhoneux, and
  Elliott}]{rust2023language}
Phillip Rust, Jonas~F. Lotz, Emanuele Bugliarello, Elizabeth Salesky, Miryam
  de~Lhoneux, and Desmond Elliott. 2023.
\newblock \href {https://openreview.net/forum?id=FkSp8VW8RjH} {Language
  modelling with pixels}.
\newblock In \emph{International Conference on Learning Representations}.

\bibitem[{Sch{\"a}fer et~al.(2024)Sch{\"a}fer, Ravfogel, Hofmann, Pimentel, and
  Schlag}]{schafer2024language}
Anton Sch{\"a}fer, Shauli Ravfogel, Thomas Hofmann, Tiago Pimentel, and Imanol
  Schlag. 2024.
\newblock \href {https://arxiv.org/abs/2404.07982} {Language imbalance can
  boost cross-lingual generalisation}.
\newblock \emph{arXiv preprint arXiv:2404.07982}.

\bibitem[{Sennrich et~al.(2016)Sennrich, Haddow, and
  Birch}]{sennrich-etal-2016-neural}
Rico Sennrich, Barry Haddow, and Alexandra Birch. 2016.
\newblock \href {https://doi.org/10.18653/v1/P16-1162} {Neural machine
  translation of rare words with subword units}.
\newblock In \emph{Proceedings of the 54th Annual Meeting of the Association
  for Computational Linguistics (Volume 1: Long Papers)}, pages 1715--1725,
  Berlin, Germany. Association for Computational Linguistics.

\bibitem[{Stani{\'c} et~al.(2023)Stani{\'c}, Ashley, Serikov, Kirsch, Faccio,
  Schmidhuber, Hofmann, and Schlag}]{stanic2023languini}
Aleksandar Stani{\'c}, Dylan Ashley, Oleg Serikov, Louis Kirsch, Francesco
  Faccio, J{\"u}rgen Schmidhuber, Thomas Hofmann, and Imanol Schlag. 2023.
\newblock \href {https://arxiv.org/pdf/2309.11197.pdf} {The languini kitchen:
  Enabling language modelling research at different scales of compute}.
\newblock \emph{arXiv preprint arXiv:2309.11197}.

\bibitem[{Touvron et~al.(2023{\natexlab{a}})Touvron, Lavril, Izacard, Martinet,
  Lachaux, Lacroix, Rozière, Goyal, Hambro, Azhar, Rodriguez, Joulin, Grave,
  and Lample}]{touvron2023llama}
Hugo Touvron, Thibaut Lavril, Gautier Izacard, Xavier Martinet, Marie-Anne
  Lachaux, Timothée Lacroix, Baptiste Rozière, Naman Goyal, Eric Hambro,
  Faisal Azhar, Aurelien Rodriguez, Armand Joulin, Edouard Grave, and Guillaume
  Lample. 2023{\natexlab{a}}.
\newblock \href {https://arxiv.org/abs/2302.13971} {{LLaMA}: Open and efficient
  foundation language models}.
\newblock \emph{arXiv preprint arXiv:2302.13971}.

\bibitem[{Touvron et~al.(2023{\natexlab{b}})Touvron, Martin, Stone, Albert,
  Almahairi, Babaei, Bashlykov, Batra, Bhargava, Bhosale, Bikel, Blecher,
  Ferrer, Chen, Cucurull, Esiobu, Fernandes, Fu, Fu, Fuller, Gao, Goswami,
  Goyal, Hartshorn, Hosseini, Hou, Inan, Kardas, Kerkez, Khabsa, Kloumann,
  Korenev, Koura, Lachaux, Lavril, Lee, Liskovich, Lu, Mao, Martinet, Mihaylov,
  Mishra, Molybog, Nie, Poulton, Reizenstein, Rungta, Saladi, Schelten, Silva,
  Smith, Subramanian, Tan, Tang, Taylor, Williams, Kuan, Xu, Yan, Zarov, Zhang,
  Fan, Kambadur, Narang, Rodriguez, Stojnic, Edunov, and
  Scialom}]{touvron2023llama2}
Hugo Touvron, Louis Martin, Kevin Stone, Peter Albert, Amjad Almahairi, Yasmine
  Babaei, Nikolay Bashlykov, Soumya Batra, Prajjwal Bhargava, Shruti Bhosale,
  Dan Bikel, Lukas Blecher, Cristian~Canton Ferrer, Moya Chen, Guillem
  Cucurull, David Esiobu, Jude Fernandes, Jeremy Fu, Wenyin Fu, Brian Fuller,
  Cynthia Gao, Vedanuj Goswami, Naman Goyal, Anthony Hartshorn, Saghar
  Hosseini, Rui Hou, Hakan Inan, Marcin Kardas, Viktor Kerkez, Madian Khabsa,
  Isabel Kloumann, Artem Korenev, Punit~Singh Koura, Marie-Anne Lachaux,
  Thibaut Lavril, Jenya Lee, Diana Liskovich, Yinghai Lu, Yuning Mao, Xavier
  Martinet, Todor Mihaylov, Pushkar Mishra, Igor Molybog, Yixin Nie, Andrew
  Poulton, Jeremy Reizenstein, Rashi Rungta, Kalyan Saladi, Alan Schelten, Ruan
  Silva, Eric~Michael Smith, Ranjan Subramanian, Xiaoqing~Ellen Tan, Binh Tang,
  Ross Taylor, Adina Williams, Jian~Xiang Kuan, Puxin Xu, Zheng Yan, Iliyan
  Zarov, Yuchen Zhang, Angela Fan, Melanie Kambadur, Sharan Narang, Aurelien
  Rodriguez, Robert Stojnic, Sergey Edunov, and Thomas Scialom.
  2023{\natexlab{b}}.
\newblock \href {https://arxiv.org/abs/2307.09288} {Llama 2: Open foundation
  and fine-tuned chat models}.
\newblock \emph{arXiv preprint arXiv:2307.09288}.

\bibitem[{Wang et~al.(2019)Wang, Singh, Michael, Hill, Levy, and
  Bowman}]{wang2019glue}
Alex Wang, Amanpreet Singh, Julian Michael, Felix Hill, Omer Levy, and
  Samuel~R. Bowman. 2019.
\newblock \href {https://openreview.net/forum?id=rJ4km2R5t7} {{GLUE}: A
  multi-task benchmark and analysis platform for natural language
  understanding}.
\newblock In \emph{International Conference on Learning Representations}.

\bibitem[{Wu et~al.(2016)Wu, Schuster, Chen, Le, Norouzi, Macherey, Krikun,
  Cao, Gao, Macherey, Klingner, Shah, Johnson, Liu, Kaiser, Gouws, Kato, Kudo,
  Kazawa, Stevens, Kurian, Patil, Wang, Young, Smith, Riesa, Rudnick, Vinyals,
  Corrado, Hughes, and Dean}]{wu2016google}
Yonghui Wu, Mike Schuster, Zhifeng Chen, Quoc~V Le, Mohammad Norouzi, Wolfgang
  Macherey, Maxim Krikun, Yuan Cao, Qin Gao, Klaus Macherey, Jeff Klingner,
  Apurva Shah, Melvin Johnson, Xiaobing Liu, Łukasz Kaiser, Stephan Gouws,
  Yoshikiyo Kato, Taku Kudo, Hideto Kazawa, Keith Stevens, George Kurian,
  Nishant Patil, Wei Wang, Cliff Young, Jason Smith, Jason Riesa, Alex Rudnick,
  Oriol Vinyals, Greg Corrado, Macduff Hughes, and Jeffrey Dean. 2016.
\newblock \href {https://arxiv.org/abs/1609.08144} {Google's neural machine
  translation system: Bridging the gap between human and machine translation}.
\newblock \emph{arXiv preprint arXiv:1609.08144}.

\bibitem[{Xue et~al.(2022)Xue, Barua, Constant, Al-Rfou, Narang, Kale, Roberts,
  and Raffel}]{xue-etal-2022-byt5}
Linting Xue, Aditya Barua, Noah Constant, Rami Al-Rfou, Sharan Narang, Mihir
  Kale, Adam Roberts, and Colin Raffel. 2022.
\newblock \href {https://doi.org/10.1162/tacl_a_00461} {{B}y{T}5: Towards a
  token-free future with pre-trained byte-to-byte models}.
\newblock \emph{Transactions of the Association for Computational Linguistics},
  10:291--306.

\bibitem[{Yu et~al.(2023)Yu, Simig, Flaherty, Aghajanyan, Zettlemoyer, and
  Lewis}]{yu2023megabyte}
Lili Yu, Daniel Simig, Colin Flaherty, Armen Aghajanyan, Luke Zettlemoyer, and
  Mike Lewis. 2023.
\newblock \href {https://openreview.net/forum?id=JTmO2V9Xpz} {{MEGABYTE}:
  Predicting million-byte sequences with multiscale transformers}.
\newblock In \emph{Thirty-seventh Conference on Neural Information Processing
  Systems}.

\end{thebibliography}

\onecolumn
\appendix

\section{\texorpdfstring{\Cref{lemma:MI_similarity}}{Lemma 1} and Proof} \label{app:MI_similarity}

\newcommand{\mathcomment}[1]{\text{\color{gray} #1}}

\begin{definition}[Time-dependent Conditional Entropy and Mutual Information] 
We define a time-dependent conditional entropy as:
\begin{align}
    \ent(\Words_{<\T} \mid \Dubwords_{\leq \T}) \defeq \sum_{\words \in \wordsspace} \sum_{\dubwords \in \dubwordsspace} \phybrid(\words, \dubwords) \sum_{t=1}^{|\words|} \log \frac{1}{\phybrid(\words_{<t} \mid \dubwords_{\leq t})}
\end{align}
with $\ent(\Words_{<\T} \mid \Dubwords_{< \T})$ defined analogously. 
Note that, while $\phybrid(\words)$ stands for the probability of the single event $\{\words\}$, we write $\phybrid(\words_{<t})$ to represent the probability of the event composed by the union of all sequences with this prefix, i.e.,  $\cup_{\words' \in \wordsspace} \{\words_{<t} \circ \words'\}$, where $\circ$ stands for concatenation.

Accordingly, we define a time-dependent mutual information as:
\begin{align}
    \mi(\Words_{<\T}; \Dubwords_{\T} \mid \Dubwords_{<\T}) =
    \ent(\Words_{<\T} \mid \Dubwords_{<\T}) - \ent(\Words_{< \T} \mid \Dubwords_{\leq \T}).
\end{align}
\end{definition}

\begin{lemma} \label{lemma:MI_similarity}
    Let $\alphabetdupsmap: \alphabet \to \Sigmaduplicated$ be a deterministic function which maps duplicated subwords $\word \in \alphabet$ to their deduplicated versions $\dubword \in \Sigmaduplicated$.
    We then have that:
    \begin{align} \label{eq:app:MI_similarity}
        \ent_{\alphabetdupsmap}(\Words) &= \ent(\Dubwords) - \mi(\Words_{<\T}; \Dubwords_{\T} \mid \Dubwords_{< \T})
    \end{align}
\end{lemma}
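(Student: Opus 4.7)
The plan is to rewrite the inner $\log \tfrac{1}{\phybrid(\alphabetdupsmap(\word_t) \mid \words_{<t})}$ using Bayes' rule, so that it splits into a $\log \tfrac{1}{\phybrid(\dubword_t \mid \dubwords_{<t})}$ term (which will average to $\ent(\Dubwords)$) plus a difference of reversed conditionals (which will average to the mutual information). The key observation is that $\alphabetdupsmap$ is deterministic, so $\dubwords_{<t}$ is a function of $\words_{<t}$, giving the identity
\begin{equation}
    \phybrid(\dubword_t \mid \words_{<t}) \;=\; \phybrid(\dubword_t \mid \words_{<t}, \dubwords_{<t}).
\end{equation}
Applying Bayes' rule in the augmented conditioning then yields
\begin{equation}
    \phybrid(\dubword_t \mid \words_{<t}) \;=\; \phybrid(\dubword_t \mid \dubwords_{<t}) \cdot \frac{\phybrid(\words_{<t} \mid \dubwords_{\leq t})}{\phybrid(\words_{<t} \mid \dubwords_{<t})},
\end{equation}
which is the algebraic backbone of the argument.

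Next, I would take logarithms, negate, sum over $t = 1, \dots, |\words|$, and take the expectation under $p(\words)$. For the first term, note that $p(\words)$ pushforward through $\alphabetdupsmap$ is exactly $\pdub(\dubwords)$ by Eq.~\ref{eq:pdub_def}, so
\begin{equation}
    \sum_{\words} p(\words) \sum_{t=1}^{|\words|} \log \frac{1}{\phybrid(\dubword_t \mid \dubwords_{<t})} \;=\; \sum_{\dubwords} \pdub(\dubwords) \sum_{t=1}^{|\dubwords|} \log \frac{1}{\pdub(\dubword_t \mid \dubwords_{<t})} \;=\; \ent(\Dubwords).
\end{equation}
For the remaining two terms, I would recognise them as exactly the time-dependent conditional entropies from the definition in the appendix, so that their difference equals
\begin{equation}
    \ent(\Words_{<\T} \mid \Dubwords_{\leq \T}) - \ent(\Words_{<\T} \mid \Dubwords_{<\T}) \;=\; -\,\mi(\Words_{<\T}; \Dubwords_{\T} \mid \Dubwords_{<\T}).
\end{equation}
Combining the three pieces gives Eq.~\ref{eq:app:MI_similarity}.

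The main obstacle I anticipate is purely bookkeeping: one has to check carefully that the expectations under $p(\words)$ actually produce the time-dependent entropies $\ent(\Words_{<\T} \mid \Dubwords_{\leq \T})$ and $\ent(\Words_{<\T} \mid \Dubwords_{<\T})$ as written in the appendix's definition. In particular, since the sum is over $t \leq |\words|$ and the joint $\phybrid(\words, \dubwords)$ is supported on $\dubwords = \alphabetdupsmap(\words)$, one needs to justify interchanging $\sum_{\words} p(\words)$ with $\sum_{\words,\dubwords} \phybrid(\words, \dubwords)$ inside the sum over $t$, and to verify that the prefix-event convention (where $\phybrid(\words_{<t})$ denotes the event $\bigcup_{\words'} \{\words_{<t} \circ \words'\}$) makes the Bayes manipulation legitimate at every truncation level. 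Once that matching is in place, the rest is algebra.
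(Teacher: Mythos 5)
Your proposal is correct and follows essentially the same route as the paper's proof: exploit determinism of $\alphabetdupsmap$ to condition additionally on $\dubwords_{<t}$, apply Bayes' rule to split the log into a $\phybrid(\dubword_t \mid \dubwords_{<t})$ term (giving $\ent(\Dubwords)$ via the pushforward) plus a ratio of reversed conditionals (giving $\ent(\Words_{<\T} \mid \Dubwords_{\leq \T}) - \ent(\Words_{<\T} \mid \Dubwords_{<\T}) = -\mi(\Words_{<\T}; \Dubwords_{\T} \mid \Dubwords_{<\T})$). Your bookkeeping caveat is exactly the point the paper flags as well, namely that $\phybrid(\words, \dubwords) > 0$ forces $\dubwords = \alphabetdupsmap(\words)$ and hence $|\words| = |\dubwords|$, which legitimises the sum interchange.
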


\begin{proof} We start with the definition of $\ent_{\alphabetdupsmap}(\Words)$ and derive \Cref{lemma:MI_similarity}:
    \begin{subequations}
    \begin{align}
        \ent_{\alphabetdupsmap}(\Words) 
        &= \sum_{\words \in \wordsspace} p(\words) \sum_{t=1}^{|\words|} \log \frac{1}{\phybrid(\alphabetdupsmap(\word_t) \mid \words_{<t})} 
        \\
        &= \sum_{\words \in \wordsspace} \sum_{\dubwords \in \dubwordsspace} \phybrid(\words, \dubwords) \sum_{t=1}^{|\words|} \log \frac{1}{\phybrid(\dubword_t \mid \words_{<t})} 
        & \mathcomment{$\phybrid(\words, \dubwords) = p(\words)$ if $\alphabetdupsmap(\words) = \dubwords$ else $0$} \\
        &= \sum_{\words \in \wordsspace} \sum_{\dubwords \in \dubwordsspace} \phybrid(\words, \dubwords) \sum_{t=1}^{|\words|} \log \frac{1}{\phybrid(\dubword_t \mid \dubwords_{<t}, \words_{<t})}
        \!\!\!\!\!\!\!\!\!\!\!\!\!\!\!\! 
        & \mathcomment{$\phybrid(\dubwords_{<t} \mid \words_{<t})$ is deterministic} \\
        &= \sum_{\words \in \wordsspace} \sum_{\dubwords \in \dubwordsspace} \phybrid(\words, \dubwords) \sum_{t=1}^{|\words|} \log \frac{\phybrid(\words_{<t} \mid \dubwords_{< t})}{\phybrid(\dubword_t \mid \dubwords_{<t})\,\phybrid(\words_{<t} \mid \dubwords_{\leq t})}
        \!\!\!\!\!\!\!\!\!\!\!\!\!\!\!\!\!
        \!\!\!
        & \mathcomment{Bayes rule} \\
        &= \sum_{\words \in \wordsspace} \sum_{\dubwords \in \dubwordsspace} \phybrid(\words, \dubwords) \sum_{t=1}^{|\words|} \left(\log \frac{1}{p(\dubword_t \mid \dubwords_{<t})} + \log \frac{\phybrid(\words_{<t} \mid \dubwords_{< t})}{\phybrid(\words_{<t} \mid \dubwords_{\leq t})} \right)
        \!\!\!\!\!\!\!\!\!\!\!\!\!\!\!\!
        \!\!\!\!\!\!\!\!\!\!\!\!\!\!\!\!
        \!\!\!\!\!\!\!\!\!\!\!\!\!\!\!\!
        & \mathcomment{Split log} \\
        &= \sum_{\dubwords \in \dubwordsspace} p(\dubwords) \sum_{t=1}^{|\dubwords|} \log \frac{1}{p(\dubword_t \mid \dubwords_{<t})}
        + \sum_{\words \in \wordsspace} \sum_{\dubwords \in \dubwordsspace} \phybrid(\words, \dubwords) \sum_{t=1}^{|\words|} \log \frac{\phybrid(\words_{<t} \mid \dubwords_{< t})}{\phybrid(\words_{<t} \mid \dubwords_{\leq t})}
        \!\!\!\!\!\!\!\!\!\!\!\!\!\!\!\!
        \!\!\!\!\!\!\!\!\!\!\!\!\!\!\!\!
        \!\!\!\!\!\!\!\!\!\!\!\!\!\!\!\!
        \!\!\!\!\!\!\!\!\!\!\!\!\!\!\!\!
        \!\!\!\!\!\!\!\!\!\!\!\!\!\!\!\!
        & \mathcomment{Split sum} \label{eq:extra_justificiation}\\
        &= \ent(\Dubwords) + \sum_{\words \in \wordsspace} \sum_{\dubwords \in \dubwordsspace} \phybrid(\words, \dubwords) \sum_{t=1}^{|\words|} \log \frac{\phybrid(\words_{<t} \mid \dubwords_{< t})}{\phybrid(\words_{<t} \mid \dubwords_{\leq t})} 
        \!\!\!\!\!\!\!\!\!\!\!\!\!\!\!\!
        \!\!\!\!\!\!\!\!\!\!\!\!\!\!\!\!
        & \mathcomment{Definition of $\ent(\Dubwords)$} \\
        &= \ent(\Dubwords) + \ent(\Words_{<\T} \mid \Dubwords_{\leq \T}) - \ent(\Words_{<\T} \mid \Dubwords_{< \T})
        \!\!\!\!\!\!\!\!\!\!\!\!\!\!\!\!
        & \mathcomment{Definitions of conditional entropies} \\
        &= \ent(\Dubwords) - \mi(\Words_{<\T}; \Dubwords_{\T} \mid \Dubwords_{< \T})
        & \mathcomment{Definition of mutual information}
    \end{align}
    \end{subequations}
Note that $\phybrid(\words, \dubwords) > 0 \implies \alphabetdupsmap(\words) = \dubwords \implies |\words| = |\dubwords|$, which is required for arriving at \cref{eq:extra_justificiation}.
\end{proof}

\clearpage

\section{Near Duplicates in Modern LLMs}
\label{sec:llm_dup_stats}
\cref{tab:llm_dup_stats} shows the near duplicate rates in vocabularies of modern large language models, namely GPT-3.5 and GPT-4 models \cite{brown2020language, gpt4}, Claude 2.1 \cite{claude2}, Llama \cite{touvron2023llama, touvron2023llama2}, Mistral 7B and Mixtral 8x7B \cite{jiang2023mistral, jiang2024mixtral}, and Gemma \cite{gemma}.

\begin{table*}[h]
\centering
\begin{tabular}{lccccc}
\toprule
      & & \multicolumn{4}{c}{\textbf{Near Duplicate Rate}} \\ \cmidrule(lr){3-6}
\textbf{Model} & \textbf{Vocabulary Size} & $\whitespacedupsmap$ & $\lowerdupsmap$ & $\pluraldupsmap$ & $\alphabetdupsmap_\text{all}$\\
\midrule
GPT-\{3.5, 4, 4-turbo\} & 100k & 19\% & 24\% & 9\% & 43\%\\
Claude 2.1 & 65k & 25\% & 23\% & 9\% & 46\% \\
Llama 1 \& 2 & 32k & 17\% & 31\% & 22\% & 35\% \\
Mistral 7B \& Mixtral 8x7B & 32k & 15\% & 32\% & 23\% & 37\%  \\
Gemma 7B & 256k & 21\% & 20\% & 7\% & 39\% \\
\bottomrule
\end{tabular}
\caption{Near duplicate rates of modern LLM vocabularies. Computed as $1 - \frac{|\Sigmaduplicated|}{|\alphabet|} $, i.e., the ratio by which the size of the vocabulary decreases when mapping all subwords to their canonical versions using the respective $\alphabetdupsmap$. For the definitions of the deduplication mappings $\alphabetdupsmap$, see \cref{sec:deduplication_implementation}.}
\label{tab:llm_dup_stats}
\vspace{-7pt}
\end{table*}

\section{Near Duplicates under $\whitespacedupsmap$}
\label{app:space_near_dup}

Near duplicates under $\lowerdupsmap$ and $\pluraldupsmap$ are generally very close in meaning. $\lowerdupsmap$ pairs often consist of the lowercase and the capitalized version of a subword, where the latter might appear at the beginning of sentences. And (sub)word pairs under $\pluraldupsmap$ tend to refer to the same lemma. For $\whitespacedupsmap$, however, the contexts in which subword variants with and without a leading space appear might differ much more: in particular, we observe this with shorter subwords that can appear both as their own word (e.g., \subword{\_he} in `and he writes`) and as a substring of a longer unrelated word (e.g., \subword{he} in `breathe`).

To quantify how frequently $\whitespacedupsmap$ near duplicate pairs show such differences, we manually inspect 100 randomly sampled pairs and note whether the subwords convey comparable meaning in their first occurrences in the training data. We find that 53\% of the pairs carry highly similar meaning in both occurrences. Examples include use in compound words (e.g., \subword{\_writing} in isolation vs \subword{writing} in `handwriting', or \subword{\_held} in isolation vs \subword{held} in `long-held') or after special characters like newlines, parentheses, or quotation marks (e.g. \subword{\_wide} in isolation vs \subword{wide} in `(wide [...])'). Of the remaining 47\%, around half appear in contexts that are clearly different (e.g., \subword{he} and \subword{\_he} as in example above) while the rest are very short subwords where the conveyed meaning is hard to determine (e.g. \subword{\_sche} in `schemed' vs \subword{sche} in `Nietzsche').

\clearpage

\section{Effect of Added Parameters for Duplicates}
\label{app:embedding_params}

\newcommand{\projcontext}{\words_{<t}[\wordnumber{i}' \to \wordnumber{i}]}

A model $\model(\words)$ over duplicated subwords has more embedding parameters than its deduplicated counterpart $\model(\dubwords)$. 
In the perfect duplication setting, however, these extra parameters should not yield an unfair advantage. 
In particular, assume an equiprobable duplicates setting, where $p(\wordnumber{i} \mid \dubwordnumber{i}) = p(\wordnumber{i}' \mid \dubwordnumber{i}) = 0.5$.
We can show that, in this setting, if there exists an optimal model $\model(\words)$ (which achieves $\xent(\Words) = \ent(\Words)$), then there also exists an equivalent model $\model(\dubwords)$. 
Further, the input and output embeddings for duplicates $\wordnumber{i}$ and $\wordnumber{i}'$ are perfectly interchangeable;
the added embeddings thus provide no benefits, as the embedding of any $\wordnumber{i}'$ can be replaced with the embedding of the respective $\wordnumber{i}$ without affecting performance.

\newcommand{\emb}{\mathbf{v}}
\newcommand{\bias}{\mathrm{b}}
\newcommand{\hiddenstate}{\mathbf{h}}
\newcommand{\embnumber}[1]{\emb_{\scaleto{\circled{#1}}{7pt}}}
\newcommand{\biasnumber}[1]{\bias_{\scaleto{\circled{#1}}{7pt}}}

To see this, note that for the optimal $\model(\words)$, we have $\model(\words) = p(\words)$ almost everywhere. Consequently, by Bayes, $\model(\word \mid \words_{<t}) = p(\word \mid \words_{<t}) $.
We thus obtain
\begin{equation}
\model(\wordnumber{i} \mid \words_{<t}) = \model(\wordnumber{i}' \mid \words_{<t})
\end{equation}
which means that the model makes identical predictions for duplicate subwords. 
Predictions are thus not affected if we assign all $\wordnumber{i}'$ the output embedding of the respective $\wordnumber{i}$ (or vice-versa).

An analogous argument holds for input embeddings. Let $\projcontext$ denote the token sequence obtained when replacing every occurrence of $\wordnumber{i}'$ in $\words_{<t}$ with $\wordnumber{i}$. For the true distribution, we know that 
$
    p(\wordnumber{i} \mid \words_{<t}) = p(\wordnumber{i} \mid \projcontext)
$ due to the perfect equivalence, and thus also:
\begin{equation}
    \model(\wordnumber{i} \mid \words_{<t}) = \model(\wordnumber{i} \mid \projcontext)
\end{equation}
This means that $\model(\words)$ makes identical predictions, whether we replace $\wordnumber{i}'$ with  $\wordnumber{i}$ in the context or not. If we assign $\wordnumber{i}'$ the input embedding of $\wordnumber{i}$, model performance is not affected. By applying this argument iteratively for all $i$, we can show that all duplicates' input embeddings can be made identical.
If the input embeddings of all duplicates can be made identical, then there exists a model $\model(\dubwordnumber{i} \mid \dubwords_{<t})$ which, when given as input $\alphabetdupsmap(\words)$, outputs the same probability as $\model(\wordnumber{i} \mid \words_{<t}) + \model(\wordnumber{i}' \mid \words_{<t})$.
In particular, let $\emb_\word$ be the output embeddings associated with $\word$.
We can show that:
\begin{subequations}
\begin{align}
    \model(\wordnumber{i} \mid \words_{<t}) + \model(\wordnumber{i}' \mid \words_{<t}) 
    &= 2\,\model(\wordnumber{i} \mid \words_{<t}) \\ 
    &= 2\,\frac{\exp({\emb_{\wordnumber{i}}\,\hiddenstate })}{\sum\limits_{\word \in \alphabet} \exp({\emb_{\word}\,\hiddenstate})} \\
    &= 2\,\frac{\exp({\emb_{\wordnumber{i}}\,\hiddenstate })}{\sum\limits_{j = 1}^{|\Sigmaduplicated|} \exp({\emb_{\wordnumber{j}}\,\hiddenstate}) + \sum\limits_{j=1}^{|\Sigmaduplicated|} \exp({\emb_{\wordnumber{j}'}\,\hiddenstate})} \\
    &= \frac{2\,\exp({\emb_{\wordnumber{i}}\,\hiddenstate})}{2\sum\limits_{j=1}^{|\Sigmaduplicated|} \,\exp({\emb_{\wordnumber{j}}\,\hiddenstate})} \\
    &= \frac{\exp({\emb_{\dubwordnumber{i}}\,\hiddenstate})}{\sum\limits_{j=1}^{|\Sigmaduplicated|} \exp({\emb_{\dubwordnumber{j}}\,\hiddenstate})} \\
    &= \model(\dubwordnumber{i} \mid \dubwords_{<t})
\end{align}
\end{subequations}
So we can construct a $\model(\dubwords)$ model from $\model(\words)$ by simply making the embeddings of $\dubwordnumber{i}$ the same as $\wordnumber{i}$. The identical input embeddings preserve the value of the hidden state $\hiddenstate$ for any context, which, along with the identical output embeddings, ensures identical model outputs.

\clearpage
\section{Embedding Similarity Results}
\label{app:emb_similarity}
While in \cref{fig:dup_pdup_cossim}, it appears that a balance between duplicated and non-duplicated subwords (i.e. $\prduplicate = 1 - \prdontduplicate$ close to 0.5) leads to higher similarity of their embeddings, \cref{fig:dup_cossim_freq_pdup} shows that the underlying factor driving embedding alignment seems to be subword frequency, or more precisely, how often the rarer version of the subword (here always $\word'$ as $\prduplicate < 0.5$) occurs.

\begin{figure}[h]
    \centering
    \includegraphics[width=0.95\linewidth]{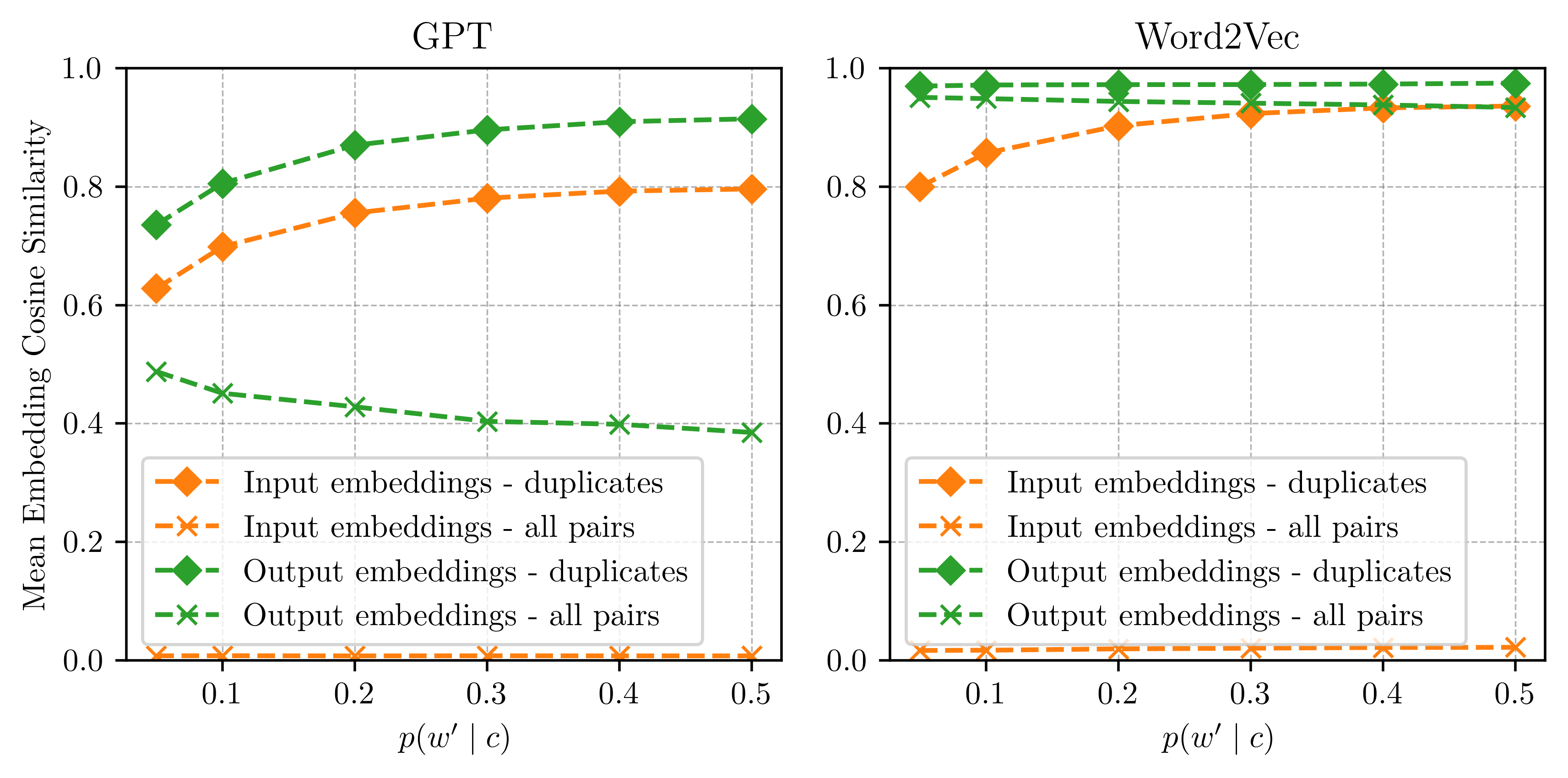}
    \caption{
    Embedding cosine similarity of duplicates $\wordnumber{i}$, $\wordnumber{i}'$ and random pairs $\wordnumber{i}$, $\wordnumber{j}$ to control for anisotropy. 
    Left: Our GPT model. Right: Word2vec embeddings trained on the same data (computed with Gensim).} %
    \label{fig:dup_pdup_cossim}
\end{figure}

\begin{figure}[h]
    \centering
    \includegraphics[width=0.5\linewidth]{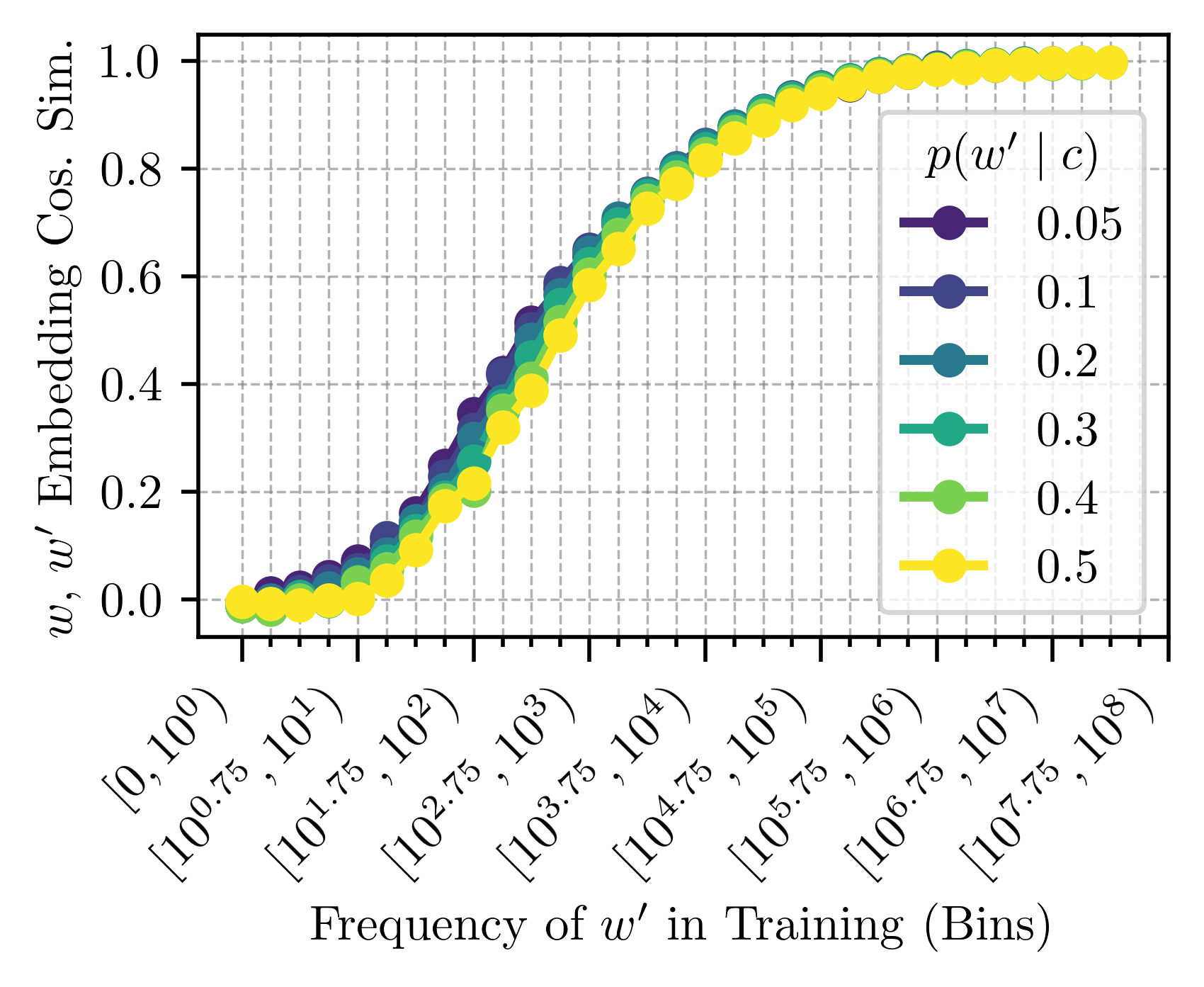}
    \vspace{-10pt}
    \caption{Embedding cosine similarity of duplicates $\wordnumber{i}$, $\wordnumber{i}'$, by frequency of the rarer $\wordnumber{i}'$. Frequencies binned and similarities averaged per bin.}
    \label{fig:dup_cossim_freq_pdup}
    \vspace{-9pt}
\end{figure}

\clearpage
\section{Effect of Deduplicated Subwords in the Context}
\label{app:dedup_subwords_in_context}
An increased number of deduplicated subwords in the context seems to generally lead to a steeper increase in surprisal than an increased number of non-deduplicated subwords, at least in the local context of 16 tokens (see Figure \cref{fig:dedup_input_causal_surprisal}). For the full context, the trend is less clear.
\begin{figure}[h]
    \centering
    \includegraphics[width=0.98\linewidth]{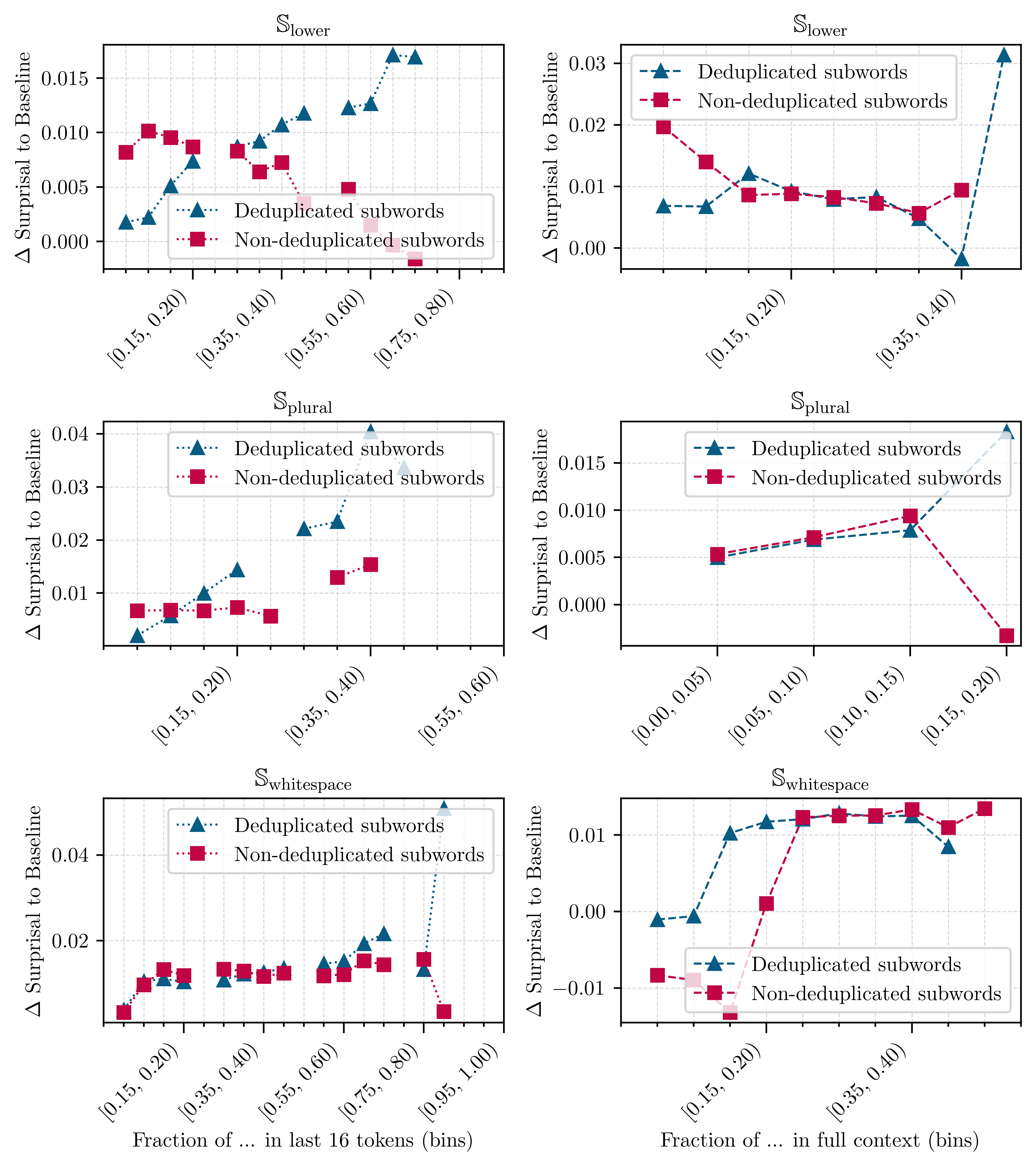}
    \caption{
        Deduplication of half of the vocabulary. Difference to baseline ($\model(\words)$) surprisal, depending on fraction of (non-)deduplicated subwords in context. Fractions binned and surprisal differences averaged per bin.
        To reduce noise and ensure readability, we only plot bins that contain at least 1k predicted tokens.
    }
    \label{fig:dedup_input_causal_surprisal}
\end{figure}

\end{document}